\documentclass{article}

\usepackage{arxiv}
\usepackage[T1]{fontenc}    
\usepackage{hyperref}       
\usepackage{url}            
\usepackage{booktabs}       
\usepackage{amsfonts}       
\usepackage{nicefrac}       
\usepackage{microtype}\usepackage{amsmath}      
\usepackage{xcolor} 
\usepackage{amsthm}
\newtheorem{theorem}{Theorem}
\newtheorem{proposition}{Proposition}
\newtheorem{lemma}{Lemma}
\theoremstyle{definition}
\newtheorem{definition}{Definition}
\usepackage{amssymb}
\usepackage{lipsum}
\usepackage{graphicx}
\graphicspath{ {./images/} }
\usepackage{placeins}
\usepackage{subcaption}

\title{Partial Effective Information Decomposition for synergistic causality}

\author{
 Mingzhe Yang\\
  School of Systems Science\\
  Beijing Normal University\\
   \And
 Shuo Wang \\
  School of Systems Science\\
  Beijing Normal University\\
  \And
 Jiang Zhang \\
  School of Systems Science\\
  Beijing Normal University\\
  \texttt{zhangjiang@bnu.edu.cn} \\
}

\date{}
\renewcommand{\today}{}

\makeatletter
\def\@date{}
\makeatother

\begin{document}
\maketitle
\begin{abstract}
Causality is a central topic in scientific inquiry, yet for complex systems, the identification and analysis of synergistic causation remain a challenging and fundamental problem. In the context of causal relations among multivariate variables, a decomposition framework grounded in interventionist causation is still lacking. To address this gap, this paper proposes Partial Effective Information Decomposition (PEID), a framework that decomposes the influence of multiple source variables on a target variable under maximum-entropy interventions into unique and synergistic information, thereby providing a unified and computable characterization of synergistic causal relations.
Theoretically, in the three-variable case, the proposed framework is compatible with the major axioms of Partial Information Decomposition (PID). Empirically, under maximum-entropy interventions, correlations among input variables are removed, causing redundancy to vanish and thereby enabling PEID to compute synergistic relations.
Furthermore, based on this framework, it is possible to define causal graphs containing hyperedges as well as downward causation, thus offering a unified toolkit for analyzing cross-scale and multivariate causal mechanisms in complex systems. Finally, applying the framework to a machine-learning-based air quality forecasting task on KnowAir-V2, we demonstrate that PEID can extract interpretable inter-station causal structures from a learned dynamical model. These results suggest that PEID provides a general interventionist information-theoretic tool for analyzing multivariate and synergistic causal mechanisms in complex systems.
\end{abstract}

\keywords{Effective information, Partial effective information decomposition, Causal emergence, Synergistic causation, Multiscale causal graph, Downward causation}

\section*{Highlights}

\begin{itemize}

\item Partial effective information decomposition (PEID) decomposes effective information into unique and synergistic causal contributions under maximum-entropy interventions.

\item EI-based causal graphs encode pairwise effective information as directed edges and synergistic effective information as hyperedges.

\item Causal emergence is characterized as multiscale compression of causal structure, in which coarse-graining absorbs microscopic synergistic dependencies into macroscopic variables.

\item PEID extends to continuous nonlinear dynamics and enables interpretable causal analysis of machine-learning-based air quality forecasting models.

\end{itemize}

\section{Introduction}

Understanding causal relations is an important means of characterizing the evolutionary mechanisms of complex systems, revealing the intrinsic dependency structures among variables, and advancing scientific discovery\cite{Granger1969CausalRelations,Shojaie2021GrangerCA,Pearl2009CausalII,MartnezSnchez2024DecomposingCI,Yuan2024EmergenceAC}. Across multiple fields, including statistics, physics, and complex systems science, causal analysis not only provides a theoretical framework for interpreting observational data, but also lays the foundation for predicting system behavior and making intervention decisions.

However, for high-dimensional complex systems, the identification and quantification of causal relations still face significant challenges. One important reason is that once a system contains three or more variables, higher-order causal relations may arise\cite{MartnezSnchez2024DecomposingCI}. In this paper, we use \emph{synergistic causality} to denote a specific form of higher-order causality in which a set of source variables jointly affects a target in a way that cannot be reduced to the causal contribution of any individual source or lower-order subset. XOR-like mechanisms provide the canonical example: neither input is informative about the output by itself, whereas the input pair jointly determines the output. To address this issue, a large body of research based on information decomposition has been devoted to identifying the redundant, unique, and synergistic components in causal relations\cite{Williams2010NonnegativeDO,Mediano2025TowardAU,Bertschinger2013SharedInformation,Harder2013BivariateRedundancy,Finn2018PointwisePID}. However, when the object of study is extended to variables of arbitrary dimension, such methods often encounter combinatorial explosion and rapidly increasing computational complexity. In addition, the notions of causality adopted in these studies are still mostly grounded in the Granger-causality framework\cite{Granger1969CausalRelations,Shojaie2021GrangerCA,MartnezSnchez2024DecomposingCI}, rather than in an intervention-based definition of causation\cite{Pearl2009CausalII}. Granger causality characterizes a directed statistical dependence based on temporal precedence and predictive gain\cite{Granger1969CausalRelations,Shojaie2021GrangerCA}.However, in the absence of additional structural assumptions, this definition is not directly equivalent to causation in the underlying generative mechanism of the system, and it cannot automatically exclude spurious associations induced by latent confounders or common causes\cite{Eichler2013MultipleTimeSeries,Stokes2017ProblemsGC}.

Another important challenge is that complex systems often exhibit significant emergent phenomena: certain stable patterns or causal structures at the macro level often cannot be adequately characterized in a direct and parsimonious manner by local interactions at the micro level\cite{Hoel2013QuantifyingCE,Rosas2020ReconcilingEA,Fromm2005TypesAF}. This implies that causal analysis of complex systems must not only focus on variable relations at a single scale, but also develop multiscale causal models capable of connecting different levels of description. Around this issue, a series of related research directions have emerged in recent years, including causal emergence theory\cite{Hoel2013QuantifyingCE,Hoel2017WhenTM,Rosas2020ReconcilingEA,Zhang2024DynamicalRA}, causal abstraction\cite{Rubenstein2017CausalCO,Zennaro2023QuantifyingCA}, and causal representation learning\cite{Scholkopf2021TowardsCRL}, which provide new theoretical tools for multiscale causal modeling in complex systems from the perspectives of comparing causal efficacy across micro and macro levels, ensuring cross-level model consistency, and discovering high-level causal variables, respectively.

To quantitatively characterize emergent phenomena, Hoel, Albantakis, and Tononi proposed the theory of causal emergence in 2013. Its core idea is to compare the dynamical causal efficacy of a system across different descriptive scales by means of a causal measure grounded in intervention semantics, and thereby determine whether the macro description is causally superior to the micro description\cite{Hoel2013QuantifyingCE,Hoel2017WhenTM}. In fact, as early as the related studies of Integrated Information Theory (IIT), Tononi and colleagues had already introduced the important concept of effective information (EI) to characterize the causal constraint relations among system states\cite{Tononi2003MeasuringII,Tononi2004AnII,Balduzzi2008IntegratedII}. Specifically, EI typically considers applying a maximum-entropy intervention to the state of the system at time $(t)$, and computes the mutual information between the states at times $(t)$ and $(t+1)$ induced by that intervention, thereby measuring the overall causal efficacy manifested by the system dynamics during state evolution. If the EI of the macro dynamics is greater than the EI of the micro dynamics, we say that causal emergence occurs in the system\cite{Hoel2013QuantifyingCE}.

However, the original theory of causal emergence is mainly concerned with comparing the overall causal efficacy of a system across different descriptive scales, that is, determining whether the macro description has a stronger overall causal effect than the micro description. In contrast, in the study of complex systems, we are often more concerned with two further questions: first, what specific causal structure exists among the variables within the system \cite{AlbantakisMarshallHoelTononi2019WhatCausedWhat}, especially whether it contains irreducible higher-order couplings and synergistic causal relations; second, what relationship exists between these within-scale causal structures and causal relations across scales.

A considerable body of research has sought to characterize the causal organization of complex systems at a fixed scale, using causal networks, information-decomposition frameworks, and more recently notions of higher-order causality that represent irreducible multivariate causal relations through hypergraph-like structures \cite{kovrenek2025higher,warrell2020cyclic,Luppi2021WhatItIsLike,VarleyHoel2022Emergence,Varley2024Synergistic,faes2026dissecting,stramaglia2014synergy}. Related studies have further extended this perspective to cross-level phenomena such as downward causation \cite{Mediano2025TowardAU,Rosas2020ReconcilingEA,mediano2022greater}. However, as noted above, most of these methods are still primarily based on empirical distributions \cite{kovrenek2025higher,faes2026dissecting,stramaglia2014synergy}, and only rarely make direct use of intervention-based EI. One important reason is that the classical definition of EI requires imposing maximum-entropy interventions on all possible states of the system, and computing the mutual information between pre- and post-intervention states in conjunction with the system's dynamical transition mechanism; therefore, when the dynamics are unknown, the state space is large, or the variables are continuous, its computation is generally regarded as difficult\cite{Rosas2020ReconcilingEA,Oizumi2016MeasuringII}.

Recent advances in data-driven dynamical modeling have made it increasingly feasible to learn transition mechanisms directly from time-series observations. A broad class of deep time-series models, neural state-space models, and time-series foundation models can now extract predictive dynamical regularities from complex observational trajectories \cite{liang2024foundation,wang2024deep}. They can be interpreted as parameterized predictive distributions. For continuous-valued systems, a common and well-established choice is the conditional Gaussian predictive model, in which the neural network represents both the expected evolution of the next state and the uncertainty around that evolution \cite{nix1994estimating,sluijterman2024optimal}. This provides a natural bridge between machine-learning-based prediction and intervention-based causal analysis: once a learned predictor defines a probabilistic transition mechanism, it can serve as a surrogate dynamics on which maximum-entropy interventions and EI-based analyses can be carried out.

Under this view, NIS+ serves as a concrete example of how this general strategy can be implemented in practice. It learns a dynamical mechanism from time-series data and searches for a coarse-grained macro representation by maximizing macro-level EI \cite{Zhang2022NeuralIS,Yang2025FindingEI}. The resulting macro-dynamics is not only more interpretable, because it captures a lower-dimensional causal structure of the system, but also tends to exhibit stronger out-of-distribution generalization capacity, since the macro-level mechanism is selected according to intervention-based causal effectiveness rather than merely empirical predictive accuracy \cite{Yuan2024EmergenceAC,Yang2025FindingEI}.

Because we require a causal structure grounded in intervention semantics, and because EI can be computed and applied in real systems, this paper proposes a theoretical framework for decomposing EI, namely Partial Effective Information Decomposition (PEID). First, we are able to prove that, in the three-variable case (two source variables and one target variable), PEID is compatible with the axiomatic system satisfied by PID\cite{Yang2025QuantifyingSS}. These axiomatic systems were introduced for the definition of redundant information, and satisfying these axioms means that the computed redundant, unique, and synergy components accord with our physical intuitions\cite{Williams2010NonnegativeDO,Bertschinger2013SharedInformation,Harder2013BivariateRedundancy,Finn2018PointwisePID}: redundant information denotes the information that both source variables can provide about the target variable, unique information denotes the information that a single source variable can provide while the other source variables cannot, and synergy denotes the information that can be provided only when both source variables are present simultaneously. PID provides a complete mathematical axiomatic system in the three-variable case, but in the case of four or more variables, the axiomatic system contains internal contradictions\cite{Lyu2026MultivariatePID}, and it also suffers from combinatorial explosion in computation.

However, the PID framework can be greatly simplified. Our proof reveals that under maximum-entropy interventions, the source variables are mutually independent, and therefore there is no source-side redundancy. Furthermore, we extend this framework to the EI decomposition of an arbitrary number of $n$ source variables with respect to a single target variable, and prove that in discrete systems all information atoms still satisfy basic mathematical properties such as nonnegativity and hierarchical additivity, while the number of atoms that need to be computed is dramatically reduced compared with the original PID.

The structure of the paper is as follows. We first provide the definitions of unique information and synergistic information under PEID, and based on these definitions construct EI-based causal graphs, including hyperedges representing synergistic information. Our primary contribution is not to infer causal relations directly from observational data, but to provide an intervention-based framework for interpreting and analyzing dynamical models from a causal perspective. Furthermore, we define an EI-based measure of downward causation. We also discuss how PEID can be defined and computed for continuous systems. In the experimental part, we first verify in discrete Boolean network systems that the synergistic information in PEID can capture the property that the whole is greater than the sum of its parts. We then present EI causal graphs across multiple scales and analyze downward causation in a toy example. Finally, for atmospheric pollution time-series data with unknown dynamics, we use machine learning to fit the dynamics and apply PEID, thereby identifying the key factors in the synergistic governance of atmospheric pollution.

\section{Definition and Property}
\label{sec:definition}

Consider a discrete-time Markov dynamical system:
\begin{equation}
X_{t+1} \sim p(x_{t+1}\mid x_t), \qquad
X_t = \bigl(X_t^{(1)}, \dots, X_t^{(n)}\bigr).
\label{eq:markov_system}
\end{equation}

Here, $X_t^{(i)}$ denotes the $i$th micro-level random variable. Next, we give the definition of EI.

\subsection{Effective Information}

\label{sec:effective_information}

To the source variables at time $t$, we apply an intervention given by the maximum-entropy distribution, yielding the interventional distribution $q^{\max}(x_t)$. For discrete systems, $q^{\max}$ is simply the uniform distribution over the input state space. The intervention operation can be written as

\begin{equation}
\mathrm{do}\bigl(X_t \sim \mathcal{U}(\Omega_{X_t})\bigr).
\label{eq:do_uniform}
\end{equation}

The post-intervention joint distribution is

\begin{equation}
q(x_t, x_{t+1}) = q^{\max}(x_t) p(x_{t+1}\mid x_t).
\label{eq:joint_q}
\end{equation}

Let $A, B \subseteq {1,\dots,n}$ denote the index subsets on the source side and target side, respectively, with corresponding variables $X_t^A$ and $X_{t+1}^B$. We define effective information (EI) as

\begin{equation}
EI(X_t^A \to X_{t+1}^B)
\equiv
I_{q^{\max}}\bigl(X_t^A; X_{t+1}^B\bigr),
\label{eq:EI_def}
\end{equation}

where $I_{q^{\max}}$ denotes the mutual information computed under the maximum-entropy interventional distribution $q^{\max}$ on the source side.

For a discrete mechanism, EI can be written explicitly as a function of the transition probability matrix (TPM). Let $\Omega_{X_t^A}=\{x_1,\dots,x_M\}$ and $\Omega_{X_{t+1}^B}=\{y_1,\dots,y_L\}$, and define
\begin{equation}
P_{X_t^A \to X_{t+1}^B}:=[p_{ij}]_{M\times L},\qquad
p_{ij}:=P\bigl(X_{t+1}^B=y_j \mid X_t^A=x_i\bigr).
\label{eq:tpm_entry}
\end{equation}
Under the maximum-entropy intervention on the source side, EI admits the explicit form
\begin{equation}
EI\!\left(P_{X_t^A \to X_{t+1}^B}\right)
=
\frac{1}{M}\sum_{i=1}^M\sum_{j=1}^L
p_{ij}\log\frac{M\,p_{ij}}{\sum_{k=1}^M p_{kj}}.
\label{eq:EI_tpm_form}
\end{equation}
This expression makes clear that, EI depends only on the causal mechanism itself.

\subsection{Three-variable case}

We first consider the information decomposition from two source variables to one target variable. Under the maximum-entropy independent intervention on the source side, we have
\begin{equation}
X_t^{(1)} \perp X_t^{(2)}.
\label{eq:ind}
\end{equation}

This independence implies that the redundancy term in PID degenerates to zero. A more detailed proof of this conclusion is given in Appendix~\ref{app:three_variable_proof}.

\begin{proposition}
\label{prop:three_variable}
Let the source variables be $\bigl\{X_t^{(1)}, X_t^{(2)}\bigr\}$ and the target variable be $X_{t+1}$.

Under an intervention that makes the source variables mutually independent, the PID redundancy term is zero, and the unique and synergistic information respectively degenerate to
\begin{equation}
\begin{aligned}
Un\bigl(X_t^{(1)}; X_{t+1} \mid X_t^{(2)}\bigr)
&= EI\bigl(X_t^{(1)} \to X_{t+1}\bigr), \\
Un\bigl(X_t^{(2)}; X_{t+1} \mid X_t^{(1)}\bigr)
&= EI\bigl(X_t^{(2)} \to X_{t+1}\bigr), \\
Syn\bigl(X_t^{(1)}, X_t^{(2)}; X_{t+1}\bigr)
&=
EI\bigl(\bigl(X_t^{(1)}, X_t^{(2)}\bigr) \to X_{t+1}\bigr)
- EI\bigl(X_t^{(1)} \to X_{t+1}\bigr)
- EI\bigl(X_t^{(2)} \to X_{t+1}\bigr).
\end{aligned}
\end{equation}
\end{proposition}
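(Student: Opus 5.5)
The plan is to work entirely inside the standard bivariate PID bookkeeping, evaluated under the post-intervention joint distribution $q(x_t,x_{t+1})=q^{\max}(x_t)\,p(x_{t+1}\mid x_t)$ of Eq.~\eqref{eq:joint_q}. Write $R$, $U_1$, $U_2$, $S$ for the redundant, two unique, and synergistic atoms of $X_t^{(1)},X_t^{(2)}$ about $X_{t+1}$. Williams--Beer consistency gives the three linear equations
\begin{equation*}
\begin{aligned}
I_q\bigl(X_t^{(1)};X_{t+1}\bigr) &= R+U_1,\\
I_q\bigl(X_t^{(2)};X_{t+1}\bigr) &= R+U_2,\\
I_q\bigl(X_t^{(1)},X_t^{(2)};X_{t+1}\bigr) &= R+U_1+U_2+S.
\end{aligned}
\end{equation*}
Three equations in four unknowns fix everything once $R$ is known. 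So the proof splits into two tasks: show $R=0$, and identify each mutual information with the corresponding $EI$ of Eq.~\eqref{eq:EI_def}.

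For the second task, note that the uniform distribution on $\Omega_{X_t}=\Omega_{X_t^{(1)}}\times\Omega_{X_t^{(2)}}$ factorizes into the product of uniform marginals; this is exactly the independence \eqref{eq:ind}. Hence the $q$-marginal of $X_t^{(i)}$ is itself the maximum-entropy distribution on $\Omega_{X_t^{(i)}}$. The induced single-source mechanism is the TPM obtained by uniformly averaging $p(x_{t+1}\mid x_t)$ over the other source. Therefore $I_q(X_t^{(i)};X_{t+1})$ is precisely $EI(X_t^{(i)}\to X_{t+1})$ in the sense of Eq.~\eqref{eq:EI_tpm_form}. The joint term is trivially $EI((X_t^{(1)},X_t^{(2)})\to X_{t+1})$. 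Substituting $R=0$ into the three equations then yields $U_i=EI(X_t^{(i)}\to X_{t+1})$ and $S=EI(\text{joint})-EI_1-EI_2$, as stated.

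The main obstacle is $R=0$, because this is not true for every redundancy measure. For instance, Williams--Beer $I_{\min}$ and the minimum-mutual-information redundancy can be positive even for independent sources. I would therefore make explicit that we use a redundancy measure from the axiom system adopted in the paper (following Yang2025QuantifyingSS) that satisfies nonnegativity together with a source-side bound of the form $R\le I(X_t^{(1)};X_t^{(2)})$. This is equivalently the requirement that redundancy vanish when sources are independent, in the spirit of the independent-identity property. Under \eqref{eq:ind} we have $I_q(X_t^{(1)};X_t^{(2)})=0$, so $0\le R\le 0$.

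If I want to avoid assuming this bound as an axiom, I would instead first try a concrete measure whose redundancy is controlled by source dependence, and verify the bound directly for it. I would defer such a verification to Appendix~\ref{app:three_variable_proof}. Either way, the remaining steps are mechanical substitution.
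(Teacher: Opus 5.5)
Your bookkeeping matches the paper's Appendix~\ref{app:three_variable_proof}. Your identification of $I_q(X_t^{(i)};X_{t+1})$ with $EI(X_t^{(i)}\to X_{t+1})$, via the uniform marginals of the product intervention, is correct and makes explicit a step the paper leaves implicit. The two routes differ only at $R=0$.

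The paper's axiom list in Appendix~\ref{app:pid-axioms} is (S), (I), (M), (Id), (LC). It contains no bound $R\le I(X_t^{(1)};X_t^{(2)})$, so you should not attribute one to it. Instead, Lemma~1 tries to \emph{derive} vanishing redundancy. It expands $Red(U,V;(U,V,T))$ by (LC) in both orders. It sets $Red(U,V;(U,V))=I(U;V)=0$ by (Id), and $Red(U,V;T\mid U,V)=0$ by (M), (I) and nonnegativity.

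If valid, that argument would turn your hypothesis into a consequence of standard-looking axioms. It is not valid. The remaining term $Red(U,V;(U,V)\mid T)$, which the paper never evaluates, equals $I(U;V\mid T)$ by (Id) applied to each conditional law. So the two expansions actually give
\begin{equation*}
Red(U,V;T)=I(U;V)-I(U;V\mid T),
\end{equation*}
which is the co-information. For independent uniform bits with $T=U\oplus V$ it equals $-1$ bit. Thus (LC), (Id) and nonnegativity have no common model, and Lemma~1 holds only vacuously.

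Your route rests on an explicit, satisfiable hypothesis, and given that hypothesis your argument is complete. The price is that vanishing redundancy is postulated rather than derived. The proposition should therefore be read as a statement about redundancy measures that vanish on independent sources. That class excludes $I_{\min}$, MMI and the BROJA measure: all three assign about $0.311$ bits of redundancy to AND with independent uniform inputs.

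Also, the independent-identity property constrains only the target $(X_t^{(1)},X_t^{(2)})$. Extending it to arbitrary targets is exactly what the paper's (LC) step was meant to supply. Your bound is therefore a genuinely stronger assumption, not a restatement of that property.
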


The above conclusion also holds for an arbitrary target random variable $T$.

\subsection{Unique and synergistic information}

\label{sec:unique_synergistic_information}

Let $A\subseteq \{1,\dots,n\}$ be a given subset of source-side indices, and let
$\mathcal{P}=\{M_1,\dots,M_m\}$ be a partition of $X_t^A$, namely
\begin{equation}
M_i\cap M_j=\emptyset \quad (i\neq j),
\qquad
\bigcup_{i=1}^m M_i=X_t^A .
\label{eq:source_partition}
\end{equation}
The definition of the target-side index subset $B$ is completely analogous to that of the source-side index subset in Sec.~\ref{sec:effective_information}, and the corresponding target variable is denoted by $X_{t+1}^B$.

Based on the result in the three-variable case (see Proposition~\ref{prop:three_variable} and Appendix~\ref{app:three_variable_proof}), we extend the same idea to an arbitrary partition $\mathcal{P}$.

\begin{definition}[unique effective information]
\label{def:partition_unique_eid}
For each part $M_i\in\mathcal{P}$, the partition-level unique effective information from $M_i$ to the target $X_{t+1}^B$ is defined as
\begin{equation}
Un_{\mathcal{P}}^{\mathrm{EID}}\bigl(M_i\to X_{t+1}^B\bigr)
:=
EI\bigl(M_i\to X_{t+1}^B\bigr).
\label{eq:partition_unique_eid}
\end{equation}
It quantifies the causal contribution that can be attributed to the part $M_i$ itself under the source-side maximum-entropy intervention.
\end{definition}

\begin{definition}[synergistic causality]
\label{def:partition_synergistic_causality}
Given a partition $\mathcal{P}=\{M_1,\dots,M_m\}$ of $X_t^A$, the strength of synergistic causality from the source set $X_t^A$ to the target $X_{t+1}^B$ relative to $\mathcal{P}$ is defined as
\begin{equation}
Syn_{\mathcal{P}}^{\mathrm{EID}}\bigl(X_t^A\to X_{t+1}^B\bigr)
:=
EI\bigl(X_t^A\to X_{t+1}^B\bigr)
-
\sum_{i=1}^m
EI\bigl(M_i\to X_{t+1}^B\bigr).
\label{eq:partition_synergistic_causality}
\end{equation}
This quantity measures the irreducible causal influence generated jointly by the parts of $\mathcal{P}$, beyond the sum of the causal influences attributable to the parts separately.
\end{definition}

By definition, we immediately obtain the following decomposition identity:
\begin{equation}
EI\bigl(X_t^A\to X_{t+1}^B\bigr)
=
\sum_{i=1}^m
Un_{\mathcal{P}}^{\mathrm{EID}}\bigl(M_i\to X_{t+1}^B\bigr)
+
Syn_{\mathcal{P}}^{\mathrm{EID}}\bigl(X_t^A\to X_{t+1}^B\bigr).
\label{eq:eid_decomposition_identity}
\end{equation}

\begin{definition}[System-level synergy]
\label{def:system_level_eid_synergy}
In particular, when $A=\{1,\dots,n\}$ and the target is taken to be the full next-time state $X_{t+1}$, define the singleton source-side partition as
\begin{equation}
\mathcal{P}_{\mathrm{fine}}
:=
\bigl\{X_t^{(1)},\dots,X_t^{(n)}\bigr\}.
\label{eq:singleton_source_partition}
\end{equation}
The corresponding system-level synergy is defined as
\begin{equation}
\Phi^{\mathrm{EID}}(X_t)
:=
Syn_{\mathcal{P}_{\mathrm{fine}}}^{\mathrm{EID}}
\bigl(X_t\to X_{t+1}\bigr)
=
EI\bigl(X_t\to X_{t+1}\bigr)
-
\sum_{i=1}^n
EI\bigl(X_t^{(i)}\to X_{t+1}\bigr).
\label{eq:system_level_eid_synergy}
\end{equation}
Thus, $\Phi^{\mathrm{EID}}(X_t)$ is the uniquely determined system-level synergistic information induced by the singleton partition, rather than a family of quantities indexed by arbitrary partitions.
\end{definition}

Next, we present a basic property of the source-side synergistic term.

\begin{theorem}[Nonnegativity of source-side synergy]
\label{thm:nonnegativity}
Under the source-side maximum-entropy intervention in the discrete case, for any source index subset $A \subseteq \{1,\dots,n\}$, any target index subset $B \subseteq \{1,\dots,n\}$, and any partition $\mathcal{P}=\{M_1,\dots,M_m\}$ of $X_t^A$, we have
\begin{equation}
0
\le
Syn_{\mathcal{P}}^{\mathrm{EID}}\bigl(X_t^A \to X_{t+1}^B\bigr)
\le
EI\bigl(X_t^A \to X_{t+1}^B\bigr).
\label{eq:synergy_nonnegativity}
\end{equation}
\end{theorem}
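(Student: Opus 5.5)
The plan is to work entirely inside the single interventional distribution $q(x_t^A, x_{t+1}^B) = q^{\max}(x_t^A)\,p(x_{t+1}^B\mid x_t^A)$. The first step is to note that, in the discrete case, the uniform distribution on the product space $\Omega_{X_t^A}=\prod_{i}\Omega_{M_i}$ factorizes into uniform marginals on each $\Omega_{M_i}$. Two consequences follow. The parts $M_1,\dots,M_m$ are mutually independent under $q$. Moreover, the marginal of $q$ on each $M_i$ is exactly the maximum-entropy intervention on $M_i$ alone. The second point matters because $EI(M_i\to X_{t+1}^B)$ is defined with its own uniform intervention and with the induced mechanism $p(x_{t+1}^B\mid m_i)$. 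That mechanism is obtained by averaging the full mechanism over the other parts under their uniform distribution. I will check explicitly that this induced TPM, combined with the uniform distribution on $M_i$, reproduces the joint marginal $q(m_i, x_{t+1}^B)$. Hence $EI(M_i\to X_{t+1}^B)=I_q(M_i;X_{t+1}^B)$, and every term in Definition~\ref{def:partition_synergistic_causality} is a mutual information under the same $q$.

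The upper bound is then immediate: each $EI(M_i\to X_{t+1}^B)=I_q(M_i;X_{t+1}^B)\ge 0$, so subtracting the sum can only decrease $EI(X_t^A\to X_{t+1}^B)$.

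For the lower bound I will write $Y=X_{t+1}^B$ and expand with the chain rule:
\begin{equation*}
I_q(M_1,\dots,M_m;Y)=\sum_{i=1}^m I_q(M_i;Y\mid M_1,\dots,M_{i-1}).
\end{equation*}
It then suffices to show, for each $i$,
\begin{equation*}
I_q(M_i;Y\mid M_{<i})\ \ge\ I_q(M_i;Y).
\end{equation*}
Using the independence of $M_i$ from $M_{<i}$, i.e.\ $I_q(M_i;M_{<i})=0$, two chain-rule expansions of $I_q(M_i;Y,M_{<i})$ give
\begin{equation*}
I_q(M_i;Y\mid M_{<i}) = I_q(M_i;Y) + I_q(M_i;M_{<i}\mid Y)\ \ge\ I_q(M_i;Y).
\end{equation*}
Summing over $i$ yields $EI(X_t^A\to Y)\ge\sum_i EI(M_i\to Y)$, which is the nonnegativity claim. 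This is the same mechanism that makes the redundancy vanish in Proposition~\ref{prop:three_variable}, now iterated over $m$ parts. Everything is finite and discrete, so all the quantities are well defined.

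The main obstacle is the identification step in the first paragraph, not the inequality. The inequality is standard once all quantities live under one joint law with independent sources. The identification would fail if the parts did not tile the coordinates of $X_t^A$ as a product space. It would also fail if one defined the induced sub-mechanism with a non-uniform weighting of the complementary parts. I will therefore state explicitly that $p(y\mid m_i)$ is obtained by marginalizing the full TPM with uniform weights over $\prod_{j\ne i}\Omega_{M_j}$. This is exactly the marginalization of $q$, and it is the interpretation consistent with the TPM formula for EI given earlier in the paper.
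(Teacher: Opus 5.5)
Your proof is correct and is essentially the paper's argument. The paper expands everything into entropies and uses $H_{q^{\max}}(X_t^A)=\sum_i H_{q^{\max}}(M_i)$ to identify the synergy with the conditional total correlation $TC_{q^{\max}}(M_1,\dots,M_m\mid X_{t+1}^B)\ge 0$, and your $\sum_i I_q(M_i;M_{<i}\mid Y)$ is just the chain-rule expansion of that same quantity; the upper bound is handled identically, and your explicit check that each $EI(M_i\to X_{t+1}^B)$ equals $I_q(M_i;X_{t+1}^B)$ fills in a step the paper uses only tacitly.
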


\begin{proof}
By definition,
\begin{equation}
Syn_{\mathcal{P}}^{\mathrm{EID}}\bigl(X_t^A \to X_{t+1}^B\bigr)
=
EI\bigl(X_t^A \to X_{t+1}^B\bigr)
-
\sum_{i=1}^m EI\bigl(M_i \to X_{t+1}^B\bigr).
\label{eq:synergy_by_definition}
\end{equation}

Under the factorized intervention on the source side, this synergistic term can be rewritten as the conditional total correlation:
\begin{equation}
Syn_{\mathcal{P}}^{\mathrm{EID}}\bigl(X_t^A \to X_{t+1}^B\bigr)
=
TC_{q^{\max}}\bigl(M_1,\dots,M_m \mid X_{t+1}^B\bigr),
\label{eq:synergy_as_conditional_tc}
\end{equation}
and conditional total correlation is always nonnegative, which yields the left inequality in Eq.~\eqref{eq:synergy_nonnegativity}. The right inequality follows directly from Eq.~\eqref{eq:eid_decomposition_identity}, because each unique term $Un_{\mathcal{P}}^{\mathrm{EID}}\bigl(M_i \to X_{t+1}^B\bigr)=EI\bigl(M_i \to X_{t+1}^B\bigr)$ is itself nonnegative. A more detailed derivation is given in Appendix \ref{app:nonnegativity-proof}.
\end{proof}

\begin{theorem}[Hierarchical additivity]
\label{thm:hierarchicaladditivity}
Let $\mathcal{P}=\{M_1,\dots,M_m\}$ be a partition of $X_t^A$, and let $M_\star \in \mathcal{P}$ denote an arbitrary block of this partition to be further refined. Suppose that $\mathcal{R}=\{R_1,\dots,R_k\}$ is a partition of $M_\star$, that is,
\begin{equation}
R_i \cap R_j=\emptyset \quad (i\neq j),
\qquad
\bigcup_{i=1}^k R_i = M_\star .
\label{eq:refinement_of_mstar}
\end{equation}
Define the refined partition
\begin{equation}
\mathcal{P}'
=
\bigl(\mathcal{P}\setminus\{M_\star\}\bigr)\cup \mathcal{R}.
\label{eq:refined_partition}
\end{equation}
Then
\begin{equation}
Syn_{\mathcal{P}'}^{\mathrm{EID}}\bigl(X_t^A \to X_{t+1}^B\bigr)
=
Syn_{\mathcal{P}}^{\mathrm{EID}}\bigl(X_t^A \to X_{t+1}^B\bigr)
+
Syn_{\mathcal{R}}^{\mathrm{EID}}\bigl(M_\star \to X_{t+1}^B\bigr).
\label{eq:hierarchical_additivity}
\end{equation}
\end{theorem}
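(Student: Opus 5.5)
The plan is a direct telescoping computation from Definition~\ref{def:partition_synergistic_causality}. No information-theoretic inequality is needed, only the definition applied three times. First, expand the left-hand side using the refined partition $\mathcal{P}'=(\mathcal{P}\setminus\{M_\star\})\cup\mathcal{R}$:
\begin{equation*}
Syn_{\mathcal{P}'}^{\mathrm{EID}}\bigl(X_t^A\to X_{t+1}^B\bigr)
=
EI\bigl(X_t^A\to X_{t+1}^B\bigr)
-\sum_{M_i\in\mathcal{P},\,M_i\neq M_\star} EI\bigl(M_i\to X_{t+1}^B\bigr)
-\sum_{j=1}^k EI\bigl(R_j\to X_{t+1}^B\bigr).
\end{equation*}
Here one must check that $\mathcal{P}'$ is a genuine partition of $X_t^A$. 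Its blocks are pairwise disjoint because the $R_j$ are disjoint subsets of $M_\star$, and $M_\star$ is disjoint from the other $M_i$. Its union is $X_t^A$ by Eq.~\eqref{eq:refinement_of_mstar}. The sum over $\mathcal{P}'$ therefore splits exactly into the two sums above, with no double counting.

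Second, add and subtract $EI(M_\star\to X_{t+1}^B)$. Grouping $EI(X_t^A\to X_{t+1}^B)$ with the full sum over $\mathcal{P}$, the sum now including the $M_\star$ term, gives exactly $Syn_{\mathcal{P}}^{\mathrm{EID}}(X_t^A\to X_{t+1}^B)$. The remaining terms are
\begin{equation*}
EI\bigl(M_\star\to X_{t+1}^B\bigr)-\sum_{j=1}^k EI\bigl(R_j\to X_{t+1}^B\bigr),
\end{equation*}
and these are precisely $Syn_{\mathcal{R}}^{\mathrm{EID}}(M_\star\to X_{t+1}^B)$. This uses Definition~\ref{def:partition_synergistic_causality} with source set $M_\star$ in place of $X_t^A$, where $M_\star\subseteq X_t^A$ corresponds to some index subset $A_\star\subseteq A$.

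The only point needing care, which I expect to be the main (mild) obstacle, is consistency of the interventional distribution. The quantity $EI(R_j\to X_{t+1}^B)$ appearing in $Syn_{\mathcal{P}'}$ is computed under maximum-entropy intervention on $X_t^A$. The same symbol in $Syn_{\mathcal{R}}$ is computed under maximum-entropy intervention on $M_\star$ alone. The same issue arises for $EI(M_\star\to X_{t+1}^B)$ in the two expressions.

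I would resolve this by noting that, by Eq.~\eqref{eq:EI_def} and Eq.~\eqref{eq:EI_tpm_form}, $EI(S\to X_{t+1}^B)$ depends only on the marginal of $q^{\max}$ on $S$ and on the induced mechanism $p(x_{t+1}^B\mid x_t^S)$. The uniform distribution on a product space has uniform marginals on every coordinate subset. Hence the marginal of $q^{\max}$ on any subset $S\subseteq A$ is the uniform distribution on $\Omega_{X_t^S}$, whichever of the two interventions is used. The induced mechanism is likewise the same in both settings: it is obtained by averaging $p(x_{t+1}^B\mid x_t^A)$ over the uniformly distributed complementary coordinates, which are independent of $X_t^S$. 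Consequently each EI term is intrinsic to the pair $(S,B)$, the identifications above are legitimate, and Eq.~\eqref{eq:hierarchical_additivity} follows.
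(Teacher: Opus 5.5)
Your proof is correct and is the same telescoping argument the paper uses. You expand $Syn_{\mathcal{P}'}^{\mathrm{EID}}$, add and subtract $EI(M_\star\to X_{t+1}^B)$, and identify the two resulting brackets as $Syn_{\mathcal{P}}^{\mathrm{EID}}$ and $Syn_{\mathcal{R}}^{\mathrm{EID}}$. Your additional check that each EI term depends only on its source--target pair is sound; the paper leaves this implicit because every EI term is computed under the same product-uniform $q^{\max}$.
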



This result shows that synergistic information can be hierarchically decomposed along the path from the micro level to the mesoscopic level and then to the macro level, while preserving strict additivity. In other words, the present framework does not require first recovering all finest-grained information atoms by Möbius inversion and then recombining them at the scale of interest; instead, one can directly compute the synergistic term at any partition level of interest. A detailed derivation is given in Appendix \ref{app:hierarchical_proof}.

\begin{proposition}[Maximality of the finest partition]
\label{prop:finest_partition_maximality}
Fix a source subset $A \subseteq \{1,\dots,n\}$ and a target subset $B \subseteq \{1,\dots,n\}$. Define the finest source-side partition associated with $A$ by
\begin{equation}
\mathcal{P}_{\mathrm{fine}}(A)
:=
\{X_t^{(i)}: i\in A\}.
\label{eq:finest_partition_A}
\end{equation}
Then, for any valid partition $\mathcal{P}=\{M_1,\dots,M_m\}$ of $X_t^A$, we have
\begin{equation}
Syn_{\mathcal{P}}^{\mathrm{EID}}\bigl(X_t^A \to X_{t+1}^B\bigr)
\le
Syn_{\mathcal{P}_{\mathrm{fine}}(A)}^{\mathrm{EID}}\bigl(X_t^A \to X_{t+1}^B\bigr).
\label{eq:finest_partition_maximality}
\end{equation}
\end{proposition}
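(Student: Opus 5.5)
The plan is to derive the inequality from the hierarchical additivity of Theorem~\ref{thm:hierarchicaladditivity} combined with the nonnegativity of Theorem~\ref{thm:nonnegativity}.

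The key observation is that any partition $\mathcal{P}=\{M_1,\dots,M_m\}$ of $X_t^A$ is coarser than $\mathcal{P}_{\mathrm{fine}}(A)$. Hence $\mathcal{P}_{\mathrm{fine}}(A)$ is reached from $\mathcal{P}$ by refining each block $M_j$ into its singletons, $\mathcal{R}_j:=\{X_t^{(i)}: i\in M_j\}$. A block that is already a singleton may be skipped, or treated via the trivial partition, whose synergy is $EI(M_j\to X_{t+1}^B)-EI(M_j\to X_{t+1}^B)=0$.

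First, I would order the blocks $M_1,\dots,M_m$ and define intermediate partitions
\[
\mathcal{P}^{(0)}=\mathcal{P}, \qquad \mathcal{P}^{(j)}=\bigl(\mathcal{P}^{(j-1)}\setminus\{M_j\}\bigr)\cup\mathcal{R}_j ,
\]
so that $\mathcal{P}^{(m)}=\mathcal{P}_{\mathrm{fine}}(A)$. Each step refines exactly one block $M_\star=M_j$ of the current partition, which is precisely the setting of Theorem~\ref{thm:hierarchicaladditivity}. That theorem gives
\[
Syn_{\mathcal{P}^{(j)}}^{\mathrm{EID}}\bigl(X_t^A\to X_{t+1}^B\bigr)=Syn_{\mathcal{P}^{(j-1)}}^{\mathrm{EID}}\bigl(X_t^A\to X_{t+1}^B\bigr)+Syn_{\mathcal{R}_j}^{\mathrm{EID}}\bigl(M_j\to X_{t+1}^B\bigr).
\]
Telescoping over $j=1,\dots,m$ then yields
\[
Syn_{\mathcal{P}_{\mathrm{fine}}(A)}^{\mathrm{EID}}=Syn_{\mathcal{P}}^{\mathrm{EID}}+\sum_{j}Syn_{\mathcal{R}_j}^{\mathrm{EID}}\bigl(M_j\to X_{t+1}^B\bigr).
\]

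Second, each correction term $Syn_{\mathcal{R}_j}^{\mathrm{EID}}(M_j\to X_{t+1}^B)$ is itself a source-side synergy. Its source subset is the index set of $M_j$ and its partition is $\mathcal{R}_j$. Theorem~\ref{thm:nonnegativity} applies to any source subset, so every such term is $\ge 0$, and the claimed inequality follows. As a sanity check on this step, the EI of $M_j$ must be computed under the maximum-entropy intervention restricted to $M_j$. Because the global intervention $q^{\max}$ is a product of uniform marginals, its marginal on $M_j$ is again uniform, so the quantities appearing in the additivity step match those covered by the nonnegativity theorem.

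The main point requiring care is exactly this consistency. The proof of Theorem~\ref{thm:hierarchicaladditivity} computes $EI(M_j\to X_{t+1}^B)$ with the transition kernel obtained by marginalizing the other source variables under uniform input. The sub-block quantities $EI(X_t^{(i)}\to X_{t+1}^B)$ must use the same marginalized mechanism. I expect this to be immediate from the product form of $q^{\max}$. Apart from that, the argument is a direct induction on the number of non-singleton blocks.
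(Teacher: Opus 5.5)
Your proposal is correct and follows the paper's argument: refine $\mathcal{P}$ block by block into $\mathcal{P}_{\mathrm{fine}}(A)$. At each step Theorem~\ref{thm:hierarchicaladditivity} adds a term $Syn_{\mathcal{R}_j}^{\mathrm{EID}}(M_j\to X_{t+1}^B)$, and Theorem~\ref{thm:nonnegativity} makes each such term nonnegative. Your explicit telescoping, and your check that the product-form $q^{\max}$ restricts to a factorized uniform distribution on each $M_j$, make precise what the paper's short sketch leaves implicit.
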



That is, for fixed source subset $A$ and fixed target subset $B$, the synergistic quantity induced by the finest partition attains the maximum among all source-side partitions. In particular, when $A=\{1,\dots,n\}$ and $B=\{1,\dots,n\}$, the right-hand side reduces to $\Phi^{\mathrm{EID}}(X_t)$.

This result follows naturally from Theorems~\ref{thm:nonnegativity} and~\ref{thm:hierarchicaladditivity}. Indeed, according to Theorem~\ref{thm:hierarchicaladditivity}, any valid partition $\mathcal{P}$ can be transformed into the finest partition $\mathcal{P}_{\mathrm{fine}}(A)$ through successive refinements, and each refinement adds a synergistic term associated with the internal structure of the part being further subdivided. By Theorem~\ref{thm:nonnegativity}, all these additional terms are nonnegative under the source-side maximum-entropy intervention. Therefore, the synergistic quantity cannot decrease during the successive refinement process, and hence the finest partition necessarily attains the maximum.

\subsection{Causal graph}
\label{sec:causal_graph}

Based on the above decomposition, we further define an EI causal graph across time. The graph contains two types of causal structures: pairwise EI edges, which represent causal influences attributable to individual source variables, and synergistic hyperedges, which represent irreducible multivariate causal influences generated by source sets.

\subsubsection{Pairwise EI edges}
\label{subsubsec:pairwise_ei_edges}

\begin{definition}[Pairwise EI edge]
\label{def:pairwise_ei_edge}
For any source variable $X_t^{(i)}$ and target variable $X_{t+1}^{(j)}$, we define the existence of a directed pairwise EI edge by
\begin{equation}
i\to j
\quad \Longleftrightarrow \quad
EI\bigl(X_t^{(i)}\to X_{t+1}^{(j)}\bigr)>0 .
\label{eq:ei_edge_existence}
\end{equation}
When such an edge exists, its weight is defined as
\begin{equation}
w_{i\to j}
:=
EI\bigl(X_t^{(i)}\to X_{t+1}^{(j)}\bigr).
\label{eq:ei_edge_weight}
\end{equation}
\end{definition}

According to Eq.~\eqref{eq:EI_tpm_form}, a pairwise EI edge exists only when interventions on the source variable induce a nontrivial change in the distribution of the target variable. Thus, the edge $i\to j$ indicates that intervening on $X_t^{(i)}$ changes the distribution of $X_{t+1}^{(j)}$ under the source-side maximum-entropy intervention, and the edge weight quantifies the strength of this intervention-based mechanistic influence.

\begin{definition}[EI causal graph]
\label{def:ei_causal_graph}
The pairwise EI causal graph is defined as the directed weighted graph
\begin{equation}
\mathcal{G}_{\mathrm{EI}}
=
\bigl(\mathcal{V}_t,\mathcal{V}_{t+1},\mathcal{E}_{\mathrm{EI}},w\bigr),
\label{eq:ei_causal_graph}
\end{equation}
where
\begin{equation}
\mathcal{V}_t
=
\{X_t^{(1)},\dots,X_t^{(n)}\},
\qquad
\mathcal{V}_{t+1}
=
\{X_{t+1}^{(1)},\dots,X_{t+1}^{(n)}\},
\label{eq:ei_causal_graph_vertices}
\end{equation}
and the directed edge set is
\begin{equation}
\mathcal{E}_{\mathrm{EI}}
=
\bigl\{
i\to j:
EI\bigl(X_t^{(i)}\to X_{t+1}^{(j)}\bigr)>0
\bigr\}.
\label{eq:ei_causal_graph_edges}
\end{equation}
Each edge $i\to j\in\mathcal{E}_{\mathrm{EI}}$ is assigned the weight $w_{i\to j}$ defined in Eq.~\eqref{eq:ei_edge_weight}.
\end{definition}

\subsubsection{Synergistic hyperedges}
\label{subsubsec:synergistic_hyperedges}

\begin{definition}[Synergistic hyperedge]
\label{def:synergistic_hyperedge}
Let $A=\{i_1,\dots,i_r\}$ be a source index subset with $r\geq 2$, and let $B=\{j_1,\dots,j_s\}$ be a nonempty target index subset. Define the singleton partition of the source set $X_t^A$ as
\begin{equation}
\mathcal{P}_{\mathrm{fine}}(A)
:=
\bigl\{X_t^{(a)}:a\in A\bigr\}.
\label{eq:hyperedge_singleton_partition}
\end{equation}
A synergistic hyperedge from $A$ to $B$ exists if and only if
\begin{equation}
Syn_{\mathcal{P}_{\mathrm{fine}}(A)}^{\mathrm{EID}}
\bigl(X_t^A\to X_{t+1}^B\bigr)
>0 .
\label{eq:synergistic_hyperedge_existence}
\end{equation}
When such a hyperedge exists, its weight is defined as
\begin{equation}
w_{A\to B}^{\mathrm{syn}}
:=
Syn_{\mathcal{P}_{\mathrm{fine}}(A)}^{\mathrm{EID}}
\bigl(X_t^A\to X_{t+1}^B\bigr),
\label{eq:synergistic_hyperedge_weight_def}
\end{equation}
or equivalently,
\begin{equation}
w_{A\to B}^{\mathrm{syn}}
=
EI\bigl(X_t^A\to X_{t+1}^B\bigr)
-
\sum_{a\in A}
EI\bigl(X_t^{(a)}\to X_{t+1}^B\bigr).
\label{eq:synergistic_hyperedge_weight}
\end{equation}
\end{definition}

The synergistic hyperedge therefore represents a causal influence that is generated jointly by the source variables in $A$ but cannot be reduced to the sum of their individual EI effects on the target. When $|B|=1$, the hyperedge points to a single target variable. When $|B|>1$, it points to a joint target subsystem and describes an irreducible multivariate influence on that subsystem at the next time step.

\begin{definition}[EI causal hypergraph]
\label{def:ei_causal_hypergraph}
The EI causal hypergraph is defined as
\begin{equation}
\mathcal{H}_{\mathrm{EI}}
=
\bigl(\mathcal{V}_t,\mathcal{V}_{t+1},
\mathcal{E}_{\mathrm{EI}},
\mathcal{E}_{\mathrm{syn}},
w,w^{\mathrm{syn}}\bigr),
\label{eq:ei_causal_hypergraph}
\end{equation}
where $\mathcal{E}_{\mathrm{EI}}$ is the pairwise EI edge set defined in Eq.~\eqref{eq:ei_causal_graph_edges}, and
\begin{equation}
\mathcal{E}_{\mathrm{syn}}
=
\Bigl\{
A\to B:
|A|\geq 2,\ 
B\neq \emptyset,\ 
Syn_{\mathcal{P}_{\mathrm{fine}}(A)}^{\mathrm{EID}}
\bigl(X_t^A\to X_{t+1}^B\bigr)>0
\Bigr\}.
\label{eq:ei_causal_hypergraph_hyperedges}
\end{equation}
Each pairwise edge is weighted by $w_{i\to j}$, and each synergistic hyperedge is weighted by $w_{A\to B}^{\mathrm{syn}}$.
\end{definition}

\subsection{Multiscale and Coarse-Graining}
\label{sec:multiscale}

Let $\psi_\ell$ denote the coarse-graining map at scale $\ell$, and define the corresponding macro-level variables by
\begin{equation}
Z_t^{(\ell)}=\psi_\ell(X_t).
\label{eq:coarse_graining_map}
\end{equation}

Let $\mathcal Z_\ell$ denote the state space of $Z_t^{(\ell)}$. To define macro-level effective information, we impose the maximum-entropy intervention directly on the macro variable $Z_t^{(\ell)}$, namely
\begin{equation}
p_{\mathrm{do}}^{(\ell)}(z)=\frac{1}{|\mathcal Z_\ell|},
\qquad z\in\mathcal Z_\ell.
\label{eq:macro_maxent_intervention}
\end{equation}

For each macro-state $z\in\mathcal Z_\ell$, define its preimage cell by
\begin{equation}
\mathcal C_z^{(\ell)}:=\{x\in\mathcal X:\psi_\ell(x)=z\}.
\label{eq:macro_preimage_cell}
\end{equation}

To implement the macro intervention $do\!\left(Z_t^{(\ell)}=z\right)$ at the micro level, we use the maximum-entropy distribution within the corresponding preimage cell:
\begin{equation}
p_\ell\!\left(x\mid do\!\left(Z_t^{(\ell)}=z\right)\right)=
\begin{cases}
\dfrac{1}{|\mathcal C_z^{(\ell)}|}, & x\in \mathcal C_z^{(\ell)},\\[6pt]
0, & x\notin \mathcal C_z^{(\ell)}.
\end{cases}
\label{eq:macro_do_realization}
\end{equation}

This induces the macro-level causal mechanism
\begin{equation}
P_\ell\!\left(z'\mid do(z)\right)
:=
\sum_{x\in\mathcal C_z^{(\ell)}}
p_\ell\!\left(x\mid do\!\left(Z_t^{(\ell)}=z\right)\right)
\sum_{x':\,\psi_\ell(x')=z'}
P\!\left(x'\mid do(x)\right).
\label{eq:macro_tpm}
\end{equation}

In the most common setting, one may apply the same coarse-graining map to both the $t$ and $t+1$ sides, thereby obtaining macro-level dynamics and naturally lifting the EI causal graph to the macro scale:
\begin{equation}
w_{u\to v}^{(\ell)}
:=
EI\!\left(Z_t^{(\ell,u)} \to Z_{t+1}^{(\ell,v)}\right).
\label{eq:macro_pairwise_edge}
\end{equation}

However, the coarse-graining here need not be symmetric on both sides. In practice, one may also coarse-grain only the source side while keeping the target side as the original micro-level whole. In this case, the mechanism induced by the macro intervention is
\begin{equation}
P_\ell\!\left(x_{t+1}\mid do(z)\right)
:=
\sum_{x\in\mathcal C_z^{(\ell)}}
p_\ell\!\left(x\mid do\!\left(Z_t^{(\ell)}=z\right)\right)
P\!\left(x_{t+1}\mid do(x)\right),
\label{eq:one_sided_macro_mechanism}
\end{equation}
and we consider the effective information from the macro variable to the micro-level whole state at the next time step, namely
\begin{equation}
EI\!\left(Z_t^{(\ell)} \to X_{t+1}\right).
\label{eq:one_sided_macro_ei}
\end{equation}

It quantifies the strength of the mechanistic influence exerted by the macro variable on the entire micro-level system at the next time step under maximum-entropy interventions at the macro level.

\subsection{Downward Causation}
\label{sec:downward_causation}
We define downward causation as the joint influence of the whole system on an individual at the next time step.
\begin{definition}[Downward causation]
\label{def:downward_causation}
For a target component $X_{t+1}^{(j)}$ with $j\in\{1,\dots,n\}$, the downward causation strength from the whole system $X_t$ to this component is defined as
\begin{equation}
DC_j
:=
EI\bigl(X_t \to X_{t+1}^{(j)}\bigr)
-
\sum_{i=1}^n
EI\bigl(X_t^{(i)} \to X_{t+1}^{(j)}\bigr).
\label{eq:downward_causation_strength}
\end{equation}
Equivalently, $DC_j$ is the synergistic causality from the singleton partition of the whole source system to the individual target $X_{t+1}^{(j)}$. It measures the part of the causal influence on $X_{t+1}^{(j)}$ that is generated by the joint state of the whole system and cannot be reduced to the sum of pairwise causal influences from individual source variables.
\end{definition}

Let the environment of the target component $X_t^{(j)}$ be $E_t^{(-j)}:=X_t^{\{1,\dots,n\}\setminus\{j\}} .$ Then the downward causation strength can be further decomposed as
\begin{equation}
\begin{aligned}
DC_j
&=
\underbrace{
EI\bigl(X_t\to X_{t+1}^{(j)}\bigr)
-
EI\bigl(X_t^{(j)}\to X_{t+1}^{(j)}\bigr)
-
EI\bigl(E_t^{(-j)}\to X_{t+1}^{(j)}\bigr)
}_{\text{Flexibility}}
\\
&\quad+
\underbrace{
EI\bigl(E_t^{(-j)}\to X_{t+1}^{(j)}\bigr)
-
\sum_{i\neq j}
EI\bigl(X_t^{(i)}\to X_{t+1}^{(j)}\bigr)
}_{\text{synergy within the environment acting on the target individual}} .
\end{aligned}
\label{eq:downward_causation_decomposition}
\end{equation}


Therefore, downward causation can be interpreted as the part of the mechanism composed of the ability of the individual to flexibly respond to environmental changes, namely Flexibility \cite{Yang2025QuantifyingSS}, together with the synergistic component generated by coordinated interactions within the environment and acting on that individual.

\section{Toy Examples}
\label{sec:toy_examples}

We next validate the rationality of the proposed measures on a set of simple systems with known dynamics.

\subsection{Boolean Networks in Which the Whole Is Greater than the Sum of Its Parts}

To verify that the measure $\Phi^{\mathrm{EID}}(X_t)$ can appropriately quantify the property that a system as a whole is greater than the sum of its parts, we follow the benchmark designed by Mediano et al. \cite{Mediano2019MeasuringIntegratedInformation} for evaluating measures of integrated information, and construct six different eight-node Boolean network systems with similar network structures, as shown in Fig.~\ref{fig:exp1net}.

The dynamics use a unified discrete probabilistic Boolean mechanism,
\begin{equation}
P\bigl(x_{t+1}^{(j)}=1 \mid \mathbf{x}_t\bigr)
=
\sigma\!\Bigl(
b_j+\alpha_j\,\mathrm{copy}_j(\mathbf{x}_t)
+\beta_j\,\mathrm{coop}_j(\mathbf{x}_t)
+\gamma_j\,\mathrm{parity}_j(\mathbf{x}_t)
\Bigr),
\label{eq:boolean_general_mechanism}
\end{equation}
where $\sigma(z)=1/(1+e^{-z})$ is the logistic sigmoid. The three terms $\mathrm{copy}_j$, $\mathrm{coop}_j$, and $\mathrm{parity}_j$ represent three types of local mechanisms with different degrees of higher-order dependence. The copy term captures additive pairwise influences, the cooperative term captures an AND-like dependence among multiple inputs, and the parity term captures an XOR-like dependence that is intrinsically synergistic. Specifically, we define
\begin{equation}
\label{eq:three_mechanisms}
\begin{aligned}
\mathrm{copy}_j(\mathbf{x}_t)
&=
\sum_{i \in \mathcal{N}_j^{\mathrm{copy}}}
w_{ij}^{\mathrm{copy}}\left(2x_t^{(i)}-1\right),\\
\mathrm{coop}_j(\mathbf{x}_t)
&=
\prod_{i \in \mathcal{C}_j} x_t^{(i)}
- 2^{-|\mathcal{C}_j|},\\
\mathrm{parity}_j(\mathbf{x}_t)
&=
\eta_j \left(
2 \left[
\left(
\sum_{i \in \mathcal{P}_j} x_t^{(i)}
\right)
\bmod 2
\right]
- 1
\right).
\end{aligned}
\end{equation}
Here, $\mathcal{N}_j^{\mathrm{copy}}$, $\mathcal{C}_j$, and $\mathcal{P}_j$ respectively denote the signed-copy input set, the cooperative input set, and the parity input set acting on node $j$. The coefficient $w_{ij}^{\mathrm{copy}}$ is the weight of the pairwise copy-like influence from node $i$ to node $j$. In the experiments below, we use the unweighted setting $w_{ij}^{\mathrm{copy}}=1$ for every existing copy input $i\in\mathcal{N}_j^{\mathrm{copy}}$; nodes not belonging to $\mathcal{N}_j^{\mathrm{copy}}$ do not contribute to the sum. The default value of $\eta_j$ is $1$.


\begin{figure}[htbp]
\centering
\captionsetup[subfigure]{labelformat=empty}

\begin{subfigure}[b]{0.9\textwidth}
\centering
\includegraphics[width=\textwidth]{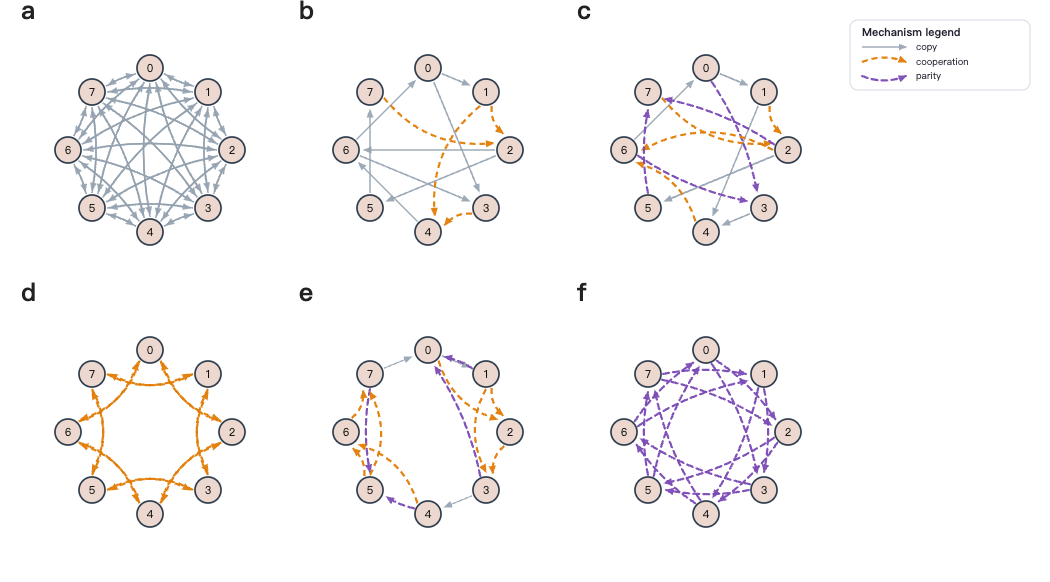}
\caption{Topologies of the six eight-node Boolean networks.}
\label{fig:exp1result_net}
\end{subfigure}

\vspace{0.5em}

\begin{subfigure}[b]{0.48\textwidth}
\centering
\includegraphics[width=\textwidth]{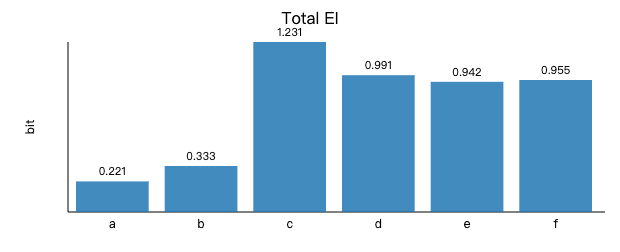}
\caption{}
\label{fig:exp1result_total}
\end{subfigure}
\hfill
\begin{subfigure}[b]{0.48\textwidth}
\centering
\includegraphics[width=\textwidth]{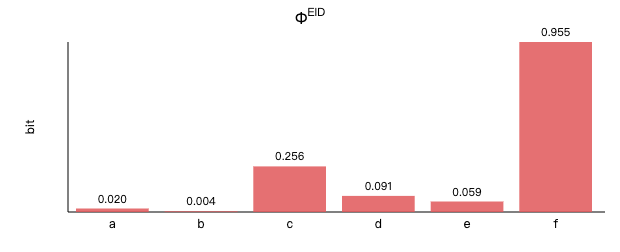}
\caption{}
\label{fig:exp1result_phi}
\end{subfigure}

\caption{Illustration of the six eight-node Boolean networks and their corresponding results.}
\label{fig:exp1net}
\label{fig:exp1result}
\end{figure}

The comparison between the total EI and $\Phi^{\mathrm{EID}}$ for these six Boolean networks is shown in Fig.~\ref{fig:exp1result}. In panel a, the topology is a fully connected network, and all interactions are copy mechanisms. Both EI and $\Phi^{\mathrm{EID}}$ are extremely low, indicating that dense connectivity alone is insufficient; synergistic interaction mechanisms are needed to produce an effect in which the whole is greater than the sum of its parts. In panel b, although the network contains edges associated with the coop mechanism, the connectivity is relatively sparse, and $\Phi^{\mathrm{EID}}$ is the lowest. By comparison, panel c introduces several strongly synergistic parity mechanisms on this basis, leading to a significant increase in both EI and $\Phi^{\mathrm{EID}}$. Panel d is a regular network in which all interactions are coop mechanisms. Its EI is close to that of panel c, but the relative contribution of $\Phi^{\mathrm{EID}}$ is lower. Panel e has a total EI close to that of panel d, but because the network connectivity clearly exhibits community-like modular divisions and is therefore more easily reducible, the relative contribution of $\Phi^{\mathrm{EID}}$ is even lower. The highest $\Phi^{\mathrm{EID}}$ appears in panel f, whose local dynamics are entirely composed of the synergistic interaction mechanism parity, while its network topology does not contain any obvious separable community modules. Therefore, the effect in which the whole is greater than the sum of its parts is the strongest.


This example demonstrates that $\Phi^{\mathrm{EID}}$ can appropriately capture the synergistic mechanism in dynamical systems by which the whole becomes greater than the sum of its parts. A Boolean network system with high $\Phi^{\mathrm{EID}}$ does not need to be particularly densely connected, but it must be able to connect all nodes into an integral whole that is difficult to partition. At the same time, this property also depends on the local dynamical mechanisms, which need to involve synergistic interactions.

\subsection{EI Causal Graph}

We next use a Boolean network with known dynamics to show that the EI-based causal graph defined in Section \ref{sec:causal_graph} can accurately reflect the underlying dynamical mechanism. This is a three-variable discrete Boolean system. Its dynamics are given by
\begin{equation}
x_{0,t+1}=\mathrm{COPY}(x_{2,t}),\qquad
x_{1,t+1}=\mathrm{AND}(x_{0,t},x_{1,t}),\qquad
x_{2,t+1}=\mathrm{AND}(x_{0,t},x_{1,t}).
\label{eq:three-variable-copy-and}
\end{equation}

The original dynamical diagram and the computed EI causal graph are shown in Fig.~\ref{fig:boolean-motif-causal-graphs}. In Fig.~(b), the pairwise edges represent unique effective information, and the edge thickness reflects the magnitude of the value. The dashed lines and circles together form hyperedges, representing synergistic effective information, with the circle size reflecting the magnitude of the value. As shown in the figure, the copy effect from $x_{1,t}$ to $x_{0,t+1}$ corresponds exactly to the unique effective information between them in the causal graph. The AND gate corresponds to a combination of unique information and synergistic information, where the synergistic information is present only in a small amount. In this example, $Syn^{\mathrm{EID}}((x_{1,t},x_{2,t})\to x_{1,t+1})=0.189$. By contrast, the 1 bit in the XOR gate is entirely synergistic effective information. If one only considers pairwise causal edges, then no causal influence from $x_{1,t}$ and $x_{0,t}$ to $x_{2,t+1}$ can be observed. This example clearly shows that an EI causal graph containing hyperedges can faithfully reflect the internal causal structure of a system.

\begin{figure}[htbp]
\centering
\begin{subfigure}[t]{0.5\textwidth}
    \centering
    \includegraphics[width=\textwidth]{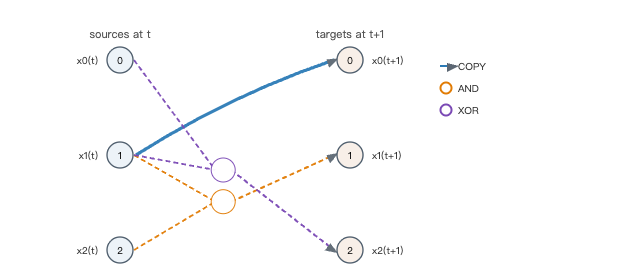}
    \caption{}
    \label{fig:three-variable-mechanism}
\end{subfigure}
\begin{subfigure}[t]{0.49\textwidth}
    \centering
    \includegraphics[width=\textwidth]{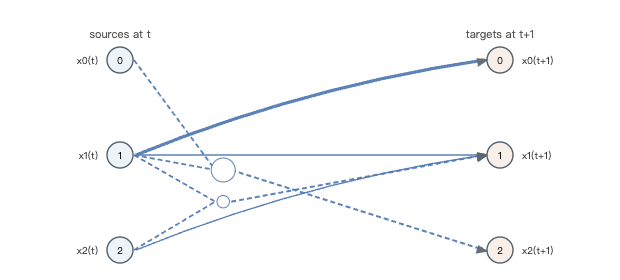}
    \caption{}
    \label{fig:three-variable-ei-graph}
\end{subfigure}
\caption{Comparison between the mechanism diagram and the EI causal graph for the three-variable system. (a)~The true mechanism structure; (b)~the EI-based causal graph.}
\label{fig:boolean-motif-causal-graphs}
\end{figure}




\subsection{Multiscale Causal Graph}

For complex systems, we can perform multiscale modeling, that is, provide causal descriptions of the system dynamics at different scales. Below is a multiscale causal graph for a 6-node discrete Boolean network, whose microscopic nodes are $a_1$, $a_2$, $b_1$, $b_2$, $c_1$, $c_2$, respectively. The dynamics are specified as
\begin{equation}
\begin{aligned}
a_{1,t+1} &= \mathrm{AND}\bigl(\mathrm{OR}(b_{1,t},b_{2,t}),\,\mathrm{OR}(c_{1,t},c_{2,t})\bigr),\\
a_{2,t+1} &= \mathrm{AND}\bigl(\mathrm{OR}(b_{1,t},b_{2,t}),\,\mathrm{XOR}(1,\mathrm{OR}(c_{1,t},c_{2,t}))\bigr),\\
b_{1,t+1} &= \mathrm{AND}\bigl(\mathrm{OR}(c_{1,t},c_{2,t}),\,\mathrm{OR}(a_{1,t},a_{2,t})\bigr),\\
b_{2,t+1} &= \mathrm{AND}\bigl(\mathrm{OR}(c_{1,t},c_{2,t}),\,\mathrm{XOR}(1,\mathrm{OR}(a_{1,t},a_{2,t}))\bigr),\\
c_{1,t+1} &= \mathrm{AND}\bigl(\mathrm{OR}(a_{1,t},a_{2,t}),\,\mathrm{OR}(b_{1,t},b_{2,t})\bigr),\\
c_{2,t+1} &= \mathrm{AND}\bigl(\mathrm{OR}(a_{1,t},a_{2,t}),\,\mathrm{XOR}(1,\mathrm{OR}(b_{1,t},b_{2,t}))\bigr).
\end{aligned}
\label{eq:context-gated-update}
\end{equation}

Fig. \ref{fig:micro-topology-overview} shows the topology of this system. Owing to the special design of the microscopic dynamical structure, we know that the ideal coarse-graining strategy is
\begin{equation}
\begin{aligned}
B_t &:= \mathrm{XOR}(b_{1,t},b_{2,t}) \oplus \mathrm{AND}(b_{1,t},b_{2,t}),\\
C_t &:= \mathrm{XOR}(c_{1,t},c_{2,t}) \oplus \mathrm{AND}(c_{1,t},c_{2,t}),\\
A_t &:= \mathrm{XOR}(a_{1,t},a_{2,t}) \oplus \mathrm{AND}(a_{1,t},a_{2,t}).
\end{aligned}
\label{eq:macro-variables-handcrafted}
\end{equation}

After coarse-graining, the macroscopic dynamics reduce to a three-node cycle: $B \to A$, $C \to B$, and $A \to C$, meaning that there is a one-to-one correspondence between the states at two consecutive time steps and that no macroscopic hyperedges exist. The gated dependencies in the microscopic dynamics that appear to be synergistic mechanisms are absorbed into the internal encoding of the macroscopic nodes. The dynamics shown in the figure are consistent with Eq.~\eqref{eq:context-gated-update}.

\begin{figure}[htbp]
\centering
\includegraphics[width=0.72\linewidth]{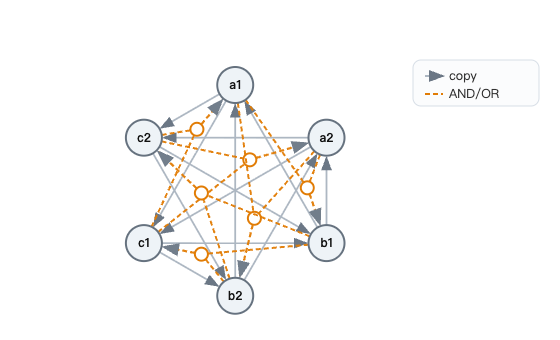}
\caption{Microscopic topology of the handcrafted system. Gray solid lines denote \texttt{copy} relations, while orange circles and dashed lines denote local gating in the form of \texttt{AND/OR}.}
\label{fig:micro-topology-overview}
\end{figure}

To investigate the relationship between synergistic causation and emergent causation, Fig.~\ref{fig:micro-macro-coarse-graining-comparison} presents the multiscale causal graph of this system under optimal coarse-graining: the upper part is the macroscopic EI causal graph, the lower part is the microscopic EI causal graph, and the colored curves on the left and right sides respectively indicate the coarse-graining mappings at time $t$ and time $t+1$. In this case, the three pairs of nodes $a_1, a_2$, $b_1, b_2$, and $c_1, c_2$ are compressed into the macroscopic nodes $A,B,C$, respectively.

\begin{figure}[htbp]
\centering
\includegraphics[width=0.90\linewidth]{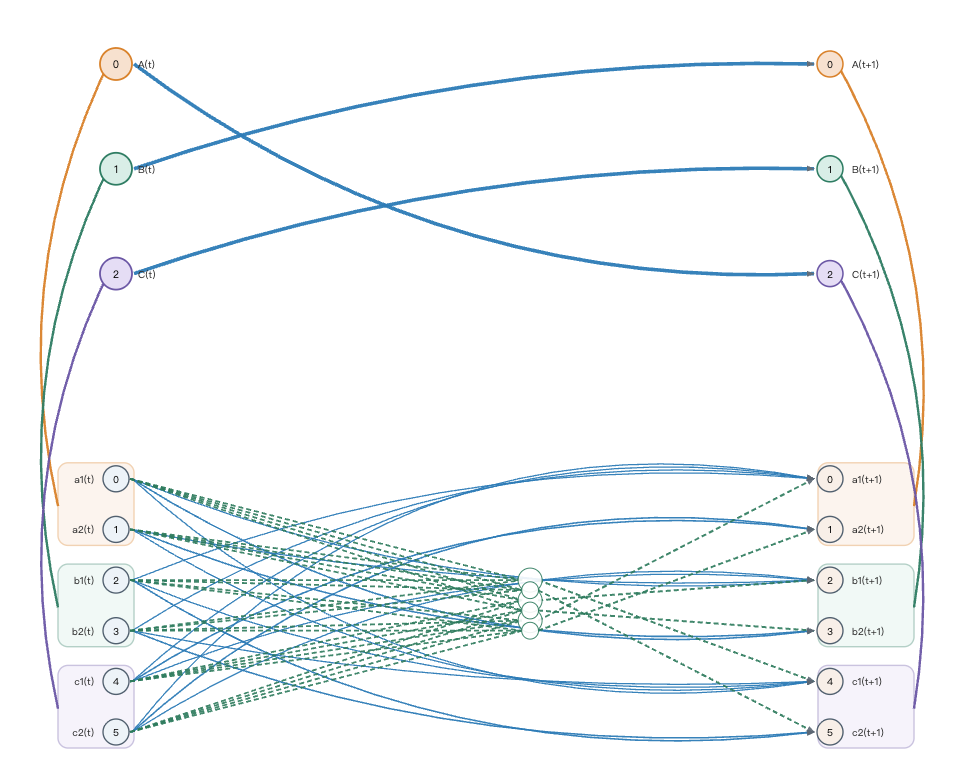}
\caption{Multiscale causal graph under optimal coarse-graining: the upper part is the macroscopic EI causal graph, the lower part is the microscopic EI causal graph, and the colored curves on the left and right sides respectively indicate the coarse-graining mappings at time $t$ and time $t+1$.}
\label{fig:micro-macro-coarse-graining-comparison}
\end{figure}

As can be seen directly from the figure, compared with the intricate causal relations in the microscopic dynamics, EI-optimal coarse-graining yields macroscopic dynamics with a simpler causal structure, showing that this system is essentially characterized by transitions between modules. The total EI of this macroscopic dynamics reaches 3 bit, making it the macroscopic dynamics with maximal EI. The absence of hyperedges in the macroscopic causal graph also indicates that the originally microscopic synergistic causal effects tend to be absorbed into the interior of the macroscopic nodes during coarse-graining. Comparative experimental results for cases without EI maximization are provided in supplementary material section \ref{sec:exp3app}.

\subsection{Downward Causation}

Rosas et al. \cite{Rosas2020ReconcilingEA} previously provided two toy examples to explain dynamical decoupling and downward causation, whose designs are shown in Fig.~\ref{fig:downward-causation-triptych}(a) and (b). Assume that each variable is binary. In the causal decoupling example (Fig.~\ref{fig:downward-causation-triptych}(a)), the macroscopic variable is always obtained from the three microscopic variables through an XOR gate and remains unchanged, whereas the microscopic variables are completely unpredictable from one another. In Fig.~\ref{fig:downward-causation-triptych}(a), $X^{(1)}_{t+1}$ is determined by the XOR of the three variables, while the other two variables change randomly. According to Eq.~\ref{eq:downward_causation_strength}, the downward causation exerted by the causal mechanism on $X^{(1)}_{t+1}$ is indeed 1 bit in this case. It is worth noting that Rosas et al. \cite{Rosas2020ReconcilingEA} did not distinguish the specific target variable when defining downward causation, whereas downward causation should evidently be defined as the influence of the whole system on a particular microscopic unit. In Fig.~\ref{fig:downward-causation-triptych}(b), there is no downward causation for $X^{(2)}_{t+1}$ or $X^{(3)}_{t+1}$.

\begin{figure}[htbp]
\centering
\begin{subfigure}[b]{0.31\textwidth}
    \centering
    \includegraphics[width=\linewidth]{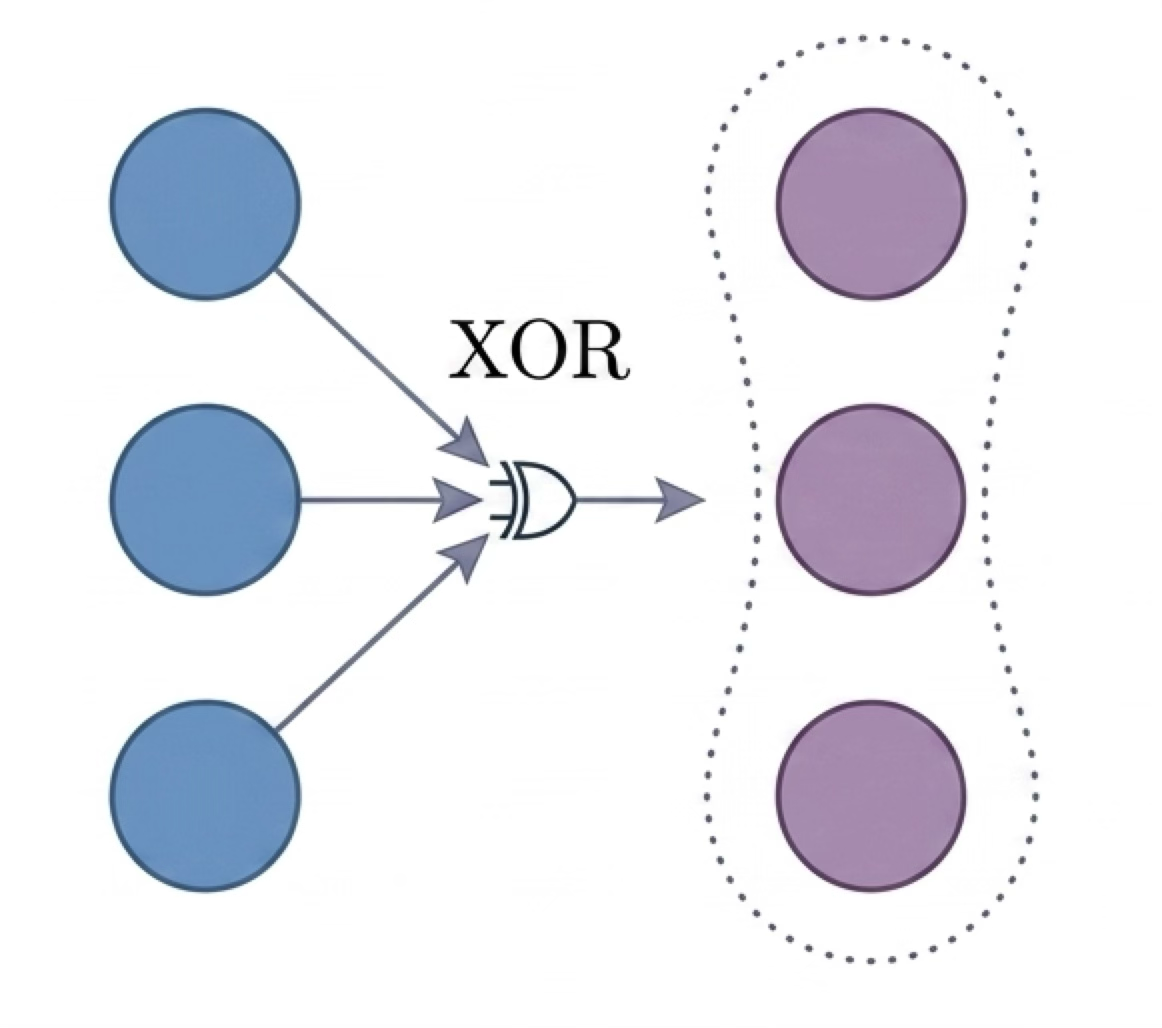}
    \caption{}
\end{subfigure}
\hfill
\begin{subfigure}[b]{0.31\textwidth}
    \centering
    \includegraphics[width=\linewidth]{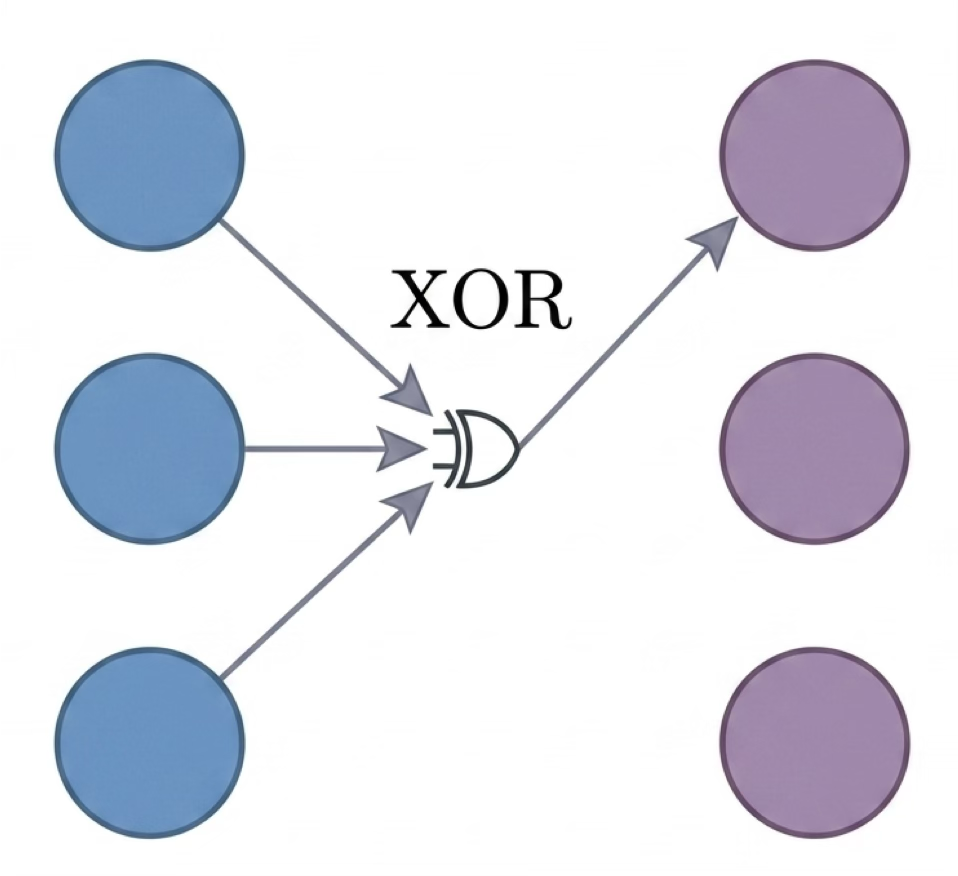}
    \caption{}
\end{subfigure}
\hfill
\begin{subfigure}[b]{0.32\textwidth}
    \centering
    \includegraphics[width=\linewidth,trim=0 0 12 0,clip]{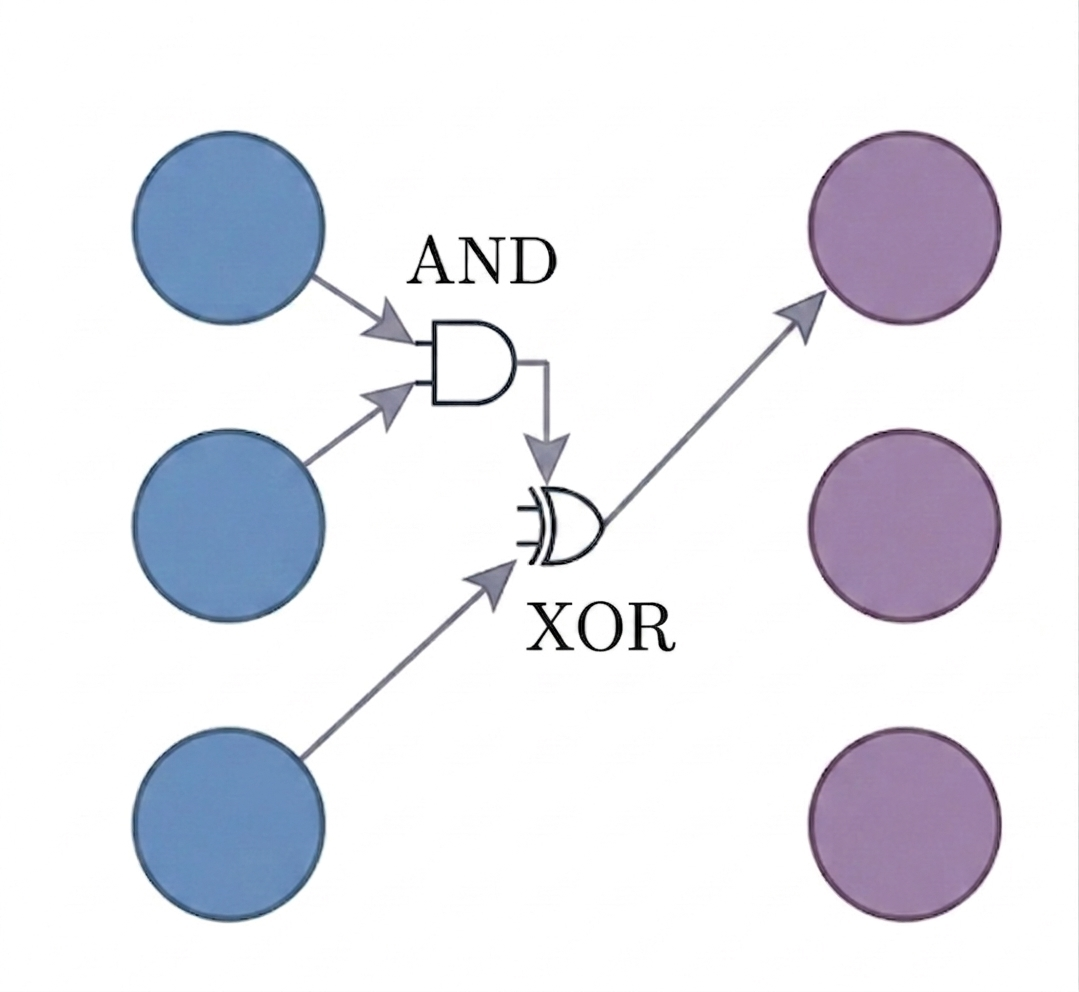}
    \caption{}
\end{subfigure}
\caption{Toy examples designed by Rosas et al. \cite{Rosas2020ReconcilingEA} and a discussion of the decomposition of downward causation. The circles from top to bottom represent $X^{(1)}_t$, $X^{(2)}_t$, and $X^{(3)}_t$, respectively. The blue and purple circles respectively denote the three microscopic variables in the dynamics at times $t$ and $t+1$. (a) Causal decoupling, (b) downward causation, and (c) downward causation can be further decomposed into flexibility and the synergistic influence of the environment on the subject. In all three examples, the overall EI is 1 bit.}
\label{fig:downward-causation-triptych}
\end{figure}

In addition to validating previous examples, we can further decompose downward causation. We can regard $X^{(1)}$ as the subject, and the other two variables as its environment. Then, according to Eq.~\ref{eq:downward_causation_decomposition}, each instance of downward causation can be decomposed into the subject's ability to respond to the environment (flexibility) and the synergistic influence of the environment on the subject (environment synergy). In Fig.~\ref{fig:downward-causation-triptych}(b), the environmental variables cannot individually exert any influence on $X^{(1)}_{t+1}$, so there is no environment synergy.

In Fig.~\ref{fig:downward-causation-triptych}(c), however, $X_{t+1}^{(1)} = \bigl(X_t^{(1)} \texttt{ AND } X_t^{(2)}\bigr) \oplus X_t^{(3)},$ and due to the presence of the AND gate, the environmental variables exert a stronger influence on $X^{(1)}_{t+1}$: when $X^{(2)}_{t}=0$, $X^{(1)}$ can no longer transmit its own information. Therefore, in this example, $DC_1 = 0.811$, where the flexibility of $X^{(1)}$ accounts for 0.5 and environment synergy accounts for 0.311.

\subsection{Synergy in Continuous Dynamics}
\label{sec:continuous_dynamics}

In addition to discrete Boolean systems, we further extend PEID to nonlinear dynamical systems in continuous state spaces. For continuous variables, EI and its synergistic decomposition require first estimating the relevant continuous probability densities from samples, where uniform sampling is performed over the input variables. To this end, this paper adopts transport map density estimation as the density estimation method for the continuous case \cite{baptista2024representation}, and uses it to compute the mutual information from joint source variables and single source variables to the target variable, thereby obtaining $EI$ and $Syn$ in continuous systems. The basic form of the transport map, the density transformation formula, and the implementation details for estimating continuous EI are provided in Appendix~\ref{app:transport-map-density-estimation}. The purpose of this section is to examine whether PEID can accurately characterize the corresponding dynamical structure in strongly nonlinear continuous systems. In the experimental setting, the independent uniform interventional distributions of the two source variables are fixed as
\begin{equation}
X_2^n, X_3^n \sim \mathrm{Uniform}\biggl[-\frac{L}{2}, \frac{L}{2}\biggr],
\qquad L = 2 .
\label{eq:tm-source-intervention}
\end{equation}
The known dynamics are
\begin{equation}
\begin{aligned}
X_1^{n+1}
&=
\alpha \sin\bigl(X_2^n X_3^n\bigr)
+
(1-\alpha)X_2^n
+
\sigma_{\epsilon}\epsilon_1^n,\\
X_2^{n+1}
&=
\sigma_{\epsilon}\epsilon_2^n,\\
X_3^{n+1}
&=
\sigma_{\epsilon}\epsilon_3^n,
\end{aligned}
\label{eq:tm-nonlinear-dynamics}
\end{equation}

where $\epsilon_1^n,\epsilon_2^n,\epsilon_3^n \overset{\mathrm{i.i.d.}}{\sim} \mathcal{N}(0,1).$ Here we are mainly concerned with the synergistic effect of $X_2^n, X_3^n$ on $X_1^{n+1}$. When the parameter $\alpha$ is large, the synergistic term $\sin(X_2^n X_3^n)$ becomes dominant. When $\alpha=0$, only the individual effect of $X_2^n$ remains in the dynamics, and no synergy is present.

Following the original definition of EI in Eq. \ref{eq:EI_def}, we compute $EI(X^n_2,X^n_3 \to X^n_1)$, $EI(X^n_2 \to X^n_1)$, and $EI(X^n_3 \to X^n_1)$, respectively. Correspondingly, the synergistic term is still defined as
\begin{equation}
Syn(X^n_2,X^n_3 \to X^n_1)
:=
EI(X^n_2,X^n_3 \to X^n_1)
-
EI(X^n_2 \to X^n_1)
-
EI(X^n_3 \to X^n_1).
\label{eq:tm-synergy}
\end{equation}

Fig.~\ref{fig:tm-alpha-noise-ei-decomposition} shows the EI decomposition results under different levels of external noise. Since $X_3^n$ has no independent effect term, $EI(X^n_3 \to X^n_1)$ remains close to zero over the entire parameter range. When $\alpha=0$, the target variable is entirely driven by the single-source term of $X_2^n$, so the joint EI is almost completely explained by the green single-source term, while the synergistic term is close to zero. As $\alpha$ increases, the single-source linear term $(1-\alpha)X_2^n$ gradually weakens, whereas the joint nonlinear term $\sin(X_2^nX_3^n)$ gradually becomes dominant. Correspondingly, $EI(X^n_2 \to X^n_1)$ decreases monotonically, while $Syn(X^n_2,X^n_3 \to X^n_1)$ continues to increase. When $\alpha=1$, the target is almost entirely determined by the joint effect of $X_2^n$ and $X_3^n$, and observing either source variable alone can hardly recover the target information; therefore, the joint EI is mainly contributed by the synergistic term. This conclusion holds under both low-noise and high-noise conditions.
\begin{figure}[htbp]
    \centering
    \begin{subfigure}[t]{0.48\textwidth}
        \centering
        \includegraphics[width=\textwidth]{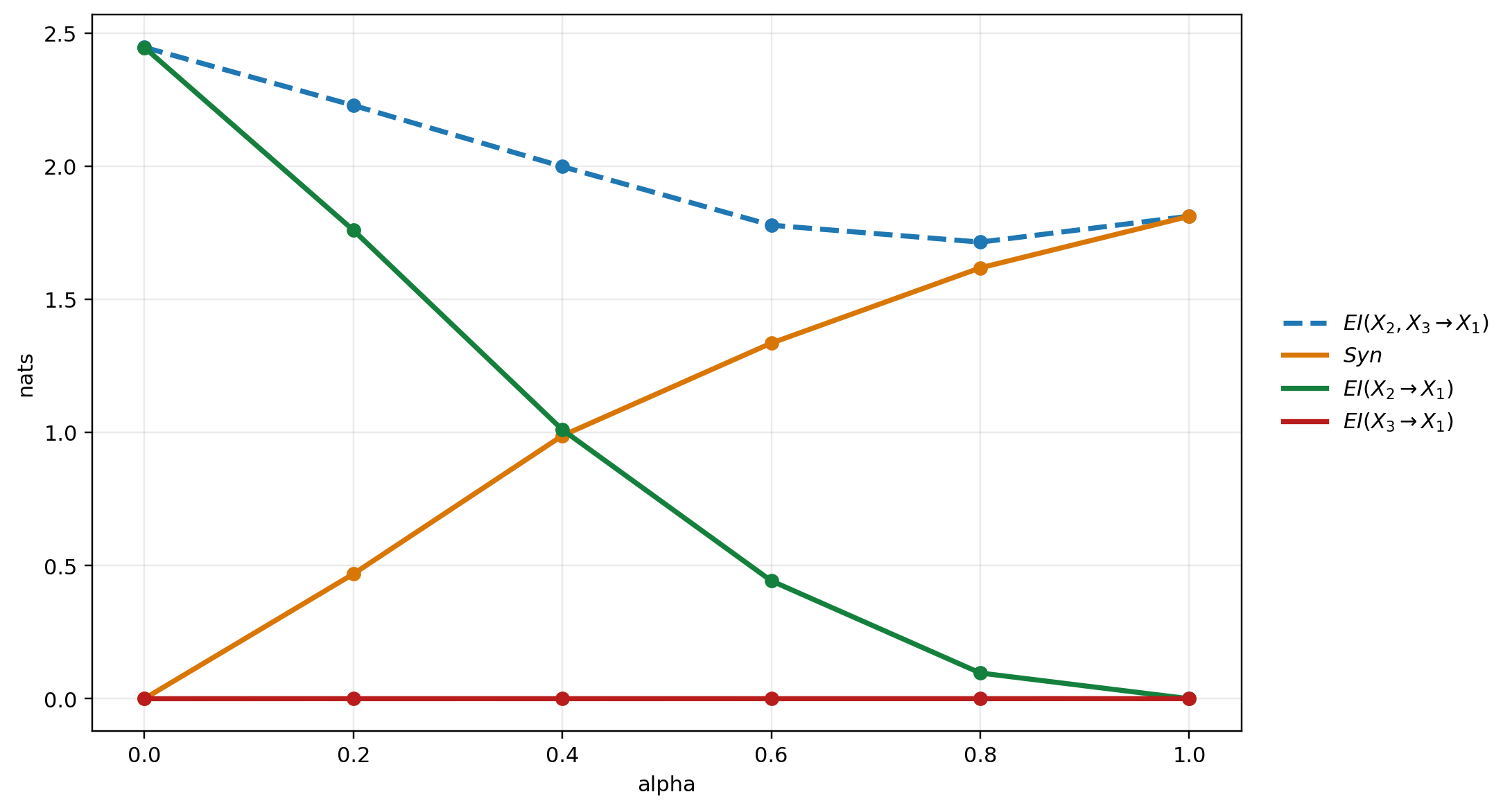}
        \caption{Low-noise condition, $\sigma_{\epsilon}=0.05$.}
        \label{fig:tm-alpha-low-noise-ei-decomposition}
    \end{subfigure}
    \hfill
    \begin{subfigure}[t]{0.48\textwidth}
        \centering
        \includegraphics[width=\textwidth]{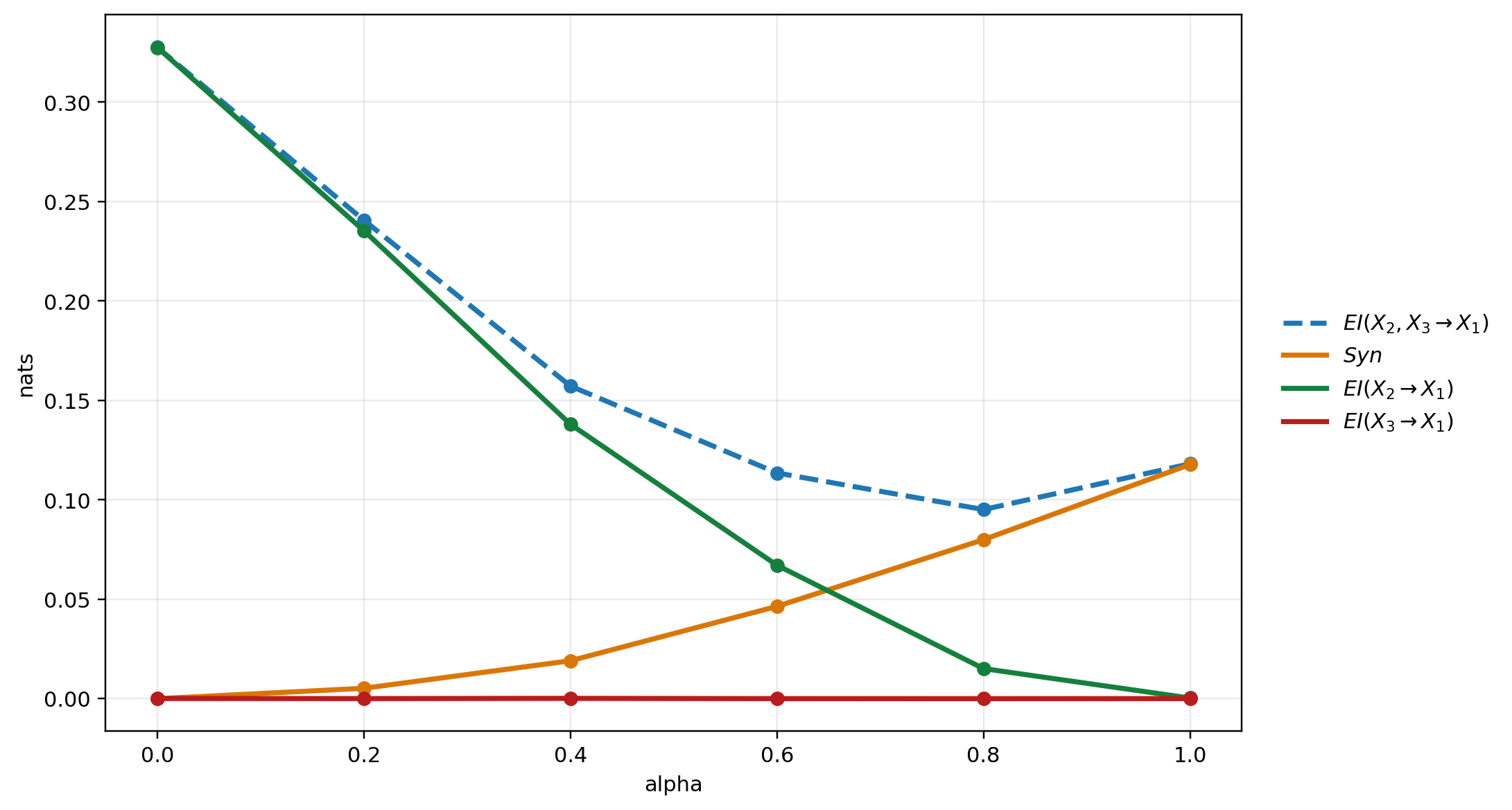}
        \caption{High-noise condition, $\sigma_{\epsilon}=0.6$.}
        \label{fig:tm-alpha-high-noise-ei-decomposition}
    \end{subfigure}
    \caption{EI decomposition results of the continuous nonlinear mechanism estimated by the transport map under different levels of external noise. The left panel corresponds to the low-noise condition, and the right panel corresponds to the high-noise condition.}
    \label{fig:tm-alpha-noise-ei-decomposition}
\end{figure}

Under the high-noise condition, the overall trend is still preserved, but the total EI becomes more sensitive to changes in the parameter $\alpha$. This indicates that, in this example, the synergistic effect in the dynamics is more easily affected by intrinsic noise than the single-variable unique effect. Nevertheless, the proportion of synergy in the joint EI remains stable.

This shows that, for strongly nonlinear continuous systems, continuous mutual information computed using transport-map-based density estimation can reliably recover the proportion of synergy. Its practical value lies in the fact that, when facing black-box machine learning models whose dynamical equations are difficult to reconstruct accurately, the causal relational structure in the dynamics can be computed through the PEID method.

\section{Application}
\label{sec:applications}

This section applies the continuous $EI$ decomposition method developed above to real-world air quality forecasting data. The experimental data are taken from KnowAir-V2, a high-quality, large-scale benchmark dataset for air quality forecasting, originally proposed together with the PCDCNet model to support the training and evaluation of deep-learning surrogate models that incorporate physical--chemical mechanisms \cite{wang2025knowair}. In this paper, we extract the Hangzhou station network from KnowAir-V2, set the prediction interval to $12h$, and, after fitting the dynamics with an MLP, simultaneously analyze the single-pollutant cross-station effects $\mathrm{O}_3 \to \mathrm{O}_3$ and $\mathrm{PM}_{2.5} \to \mathrm{O}_3$, as well as the bivariate synergistic effect of source-station O$_3$, PM$_{2.5}$ on target-station O$_3$. A detailed description of the data and machine-learning settings is provided in Section \ref{sec:air-pollution-experiment-details}.

The EI causal graph results are shown in Fig \ref{fig:hangzhou-air-ei-syn}.  Fig \ref{fig:hangzhou-air-ei-syn}(a) shows that the $12h$ $\mathrm{O}_3 \to \mathrm{O}_3$ effect in Hangzhou is not uniformly distributed across the entire network, but is dominated by a small number of strong edges. Station $1231A$ is particularly prominent, pointing to multiple target stations including $3558A$, $3557A$, $1223A$, $1228A$, and $1224A$. This indicates that, in the learned predictor, perturbations to input O$_3$ at 1231A propagate to multiple downstream stations. This pattern is consistent with previous observations of an east-higher--west-lower spatial gradient of ozone across Hangzhou stations, with the northeastern area exhibiting relatively elevated ozone levels \cite{chen2026hangzhouozone}. In the learned predictor, the outgoing edges from 1231A are therefore compatible with, but do not by themselves demonstrate, its possible role as an upwind ozone source under certain circulation conditions. Moreover, hourly local circulations in the Hangzhou metropolitan area, including sea--land breezes, urban heat-island circulation, and valley circulations, can mediate inter-station ozone redistribution \cite{han2023coastallocalcirculations}. Station $3558A$ (Lin'an Municipal Government Building \cite{wang2025knowair}) is located in a foothill basin-like environment, surrounded by mountains on three sides, where particulate matter and meteorological factors exhibit obvious lagged relationships \cite{lin2019linanpollution}. Therefore, in the learned predictor, $3558A$ may appear both as a receptor-like node and as an intermediate node whose $\mathrm{O}_3$ input contains terrain- and circulation-related delayed information relevant to nearby stations such as $1227A$.

\begin{figure}[htbp]
\centering
\includegraphics[width=\textwidth]{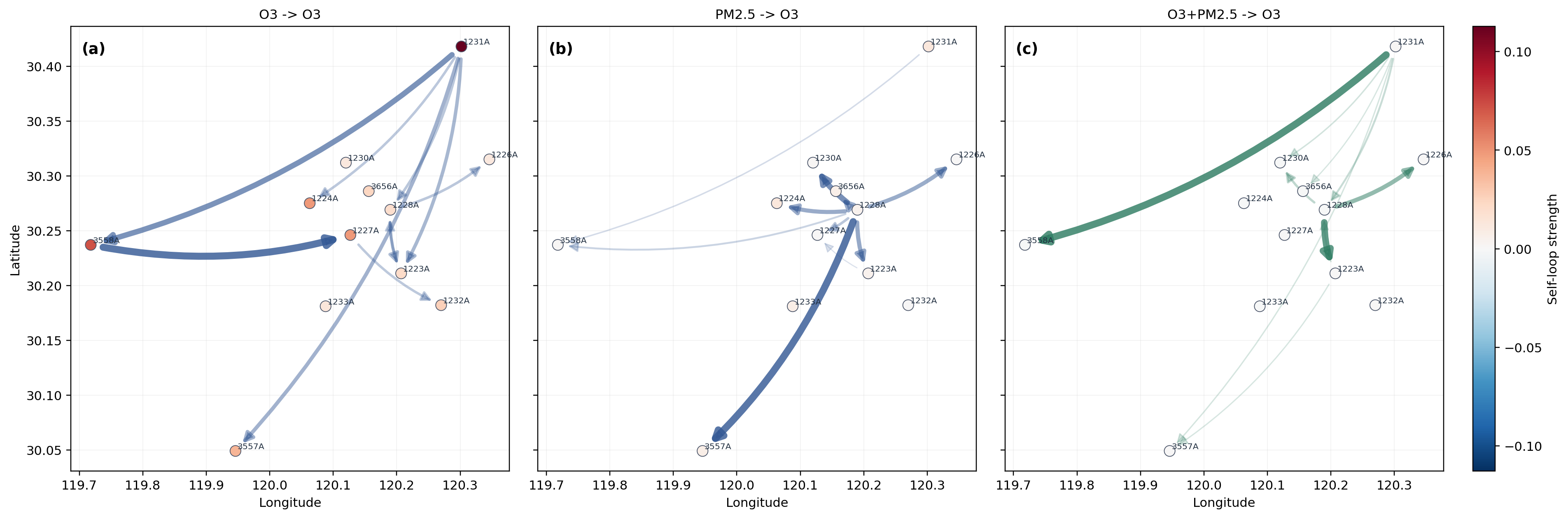}
\caption{Model-based inter-station $EI$ and $Syn$ structures on the Hangzhou $12\,\mathrm{h}$ air-quality forecasting task. (a) shows the $\mathrm{O}_3 \to \mathrm{O}_3$ pairwise graph, (b) shows the $\mathrm{PM}_{2.5} \to \mathrm{O}_3$ pairwise graph, and (c) shows the $\mathrm{O}_3+\mathrm{PM}_{2.5} \to \mathrm{O}_3$ synergy graph. Nodes are placed at their real longitude--latitude locations, and node colors indicate self-loop strength; arrows indicate cross-station edges, and line widths indicate the relative strength of each type of edge. The edges displayed in the figure are the top-10 edges selected according to their magnitudes.}
\label{fig:hangzhou-air-ei-syn}
\end{figure}

Fig \ref{fig:hangzhou-air-ei-syn}(b) presents the causal structure of $\mathrm{PM}_{2.5} \to \mathrm{O}_3$. Here, the strongest edges are highly concentrated among the outgoing edges from $1228A$ (Zhejiang Agricultural University). Combined with the urban-core context of this station, this pattern suggests that the learned predictor may capture information associated with traffic- and industry-related precursor conditions, rather than a direct physical causal pathway from $\mathrm{PM}_{2.5}$ to $\mathrm{O}_3$. Urban-core traffic, industrial sources, solvent use, and combustion sources jointly affect VOCs, NOx, secondary aerosols, and $\mathrm{PM}_{2.5}$, and these variables in turn jointly modify the formation and transport conditions of O$_3$ \cite{wang2024hangzhouvocs,tao2024pm25vocs}. Therefore, when the prediction window is extended to $12h$ or longer, the PM$_{2.5}$ variable at $1228A$ becomes an important factor in changes in ozone concentration.

Fig \ref{fig:hangzhou-air-ei-syn}(c) shows the bivariate synergistic term. Its pattern is similar to that in panel (b), both reflecting the importance of $1228A$ (Zhejiang Agricultural University) as a source station. This indicates that the coordinated control of PM$_{2.5}$ and O$_3$ is an important issue, especially for ozone pollution caused by urban-core traffic. They also exhibit important differences. For example, $1228A \to 3557A$ is strong in the $\mathrm{PM}_{2.5} \to \mathrm{O}_3$ causal graph, but is almost absent in the synergy graph. This indicates that, in the learned predictor, PM$_{2.5}$ at 1228A contributes a non-redundant unique predictive component to forecasted O$_3$ at 3557A, whereas O$_3$ at 1228A contributes essentially none. This is likely because urban tropospheric $\mathrm{O}_3$ has a relatively short photochemical lifetime---on the order of hours near sources to one to two days under transport conditions---so its cross-station information flow depends strongly on wind fields, local circulation, and photochemical conditions \cite{dewan2022tropospheric}.

It is worth emphasizing that the EI and Syn quantities reported in this section are properties of the trained MLP under input-side maximum-entropy interventions. Their interpretation as causal quantities of the underlying atmospheric system requires the strong assumption of causal sufficiency. In the present experiment, we use 10 input variables, including two pollutants and eight meteorological variables, whereas KnowAir-V2 also provides MEIC anthropogenic emission inventories, including NOx, VOC, SO$_2$, NH$_3$, primary PM$_{2.5}$, and PM$_{10}$. These emission variables are direct exogenous drivers of both PM$_{2.5}$ and O$_3$, and any inter-station edge mediated by shared upwind emissions may be misattributed under the present configuration. Therefore, the results below should be interpreted as intervention-based responses and information-flow structures of the learned predictor, rather than as direct evidence of physical causal effects in the atmospheric system.

\section{Discussion}
\label{sec:discussion}

This paper proposes a unified analytical framework based on effective information decomposition for studying the relationship among causal emergence, effective information, and synergistic information. The central idea is that, under an interventionist view of causation, the dynamical mechanism that maps a system from its current state to its future state can be regarded as a causal channel, and the effective information contributions of different source variables, groups of variables, or macroscopic variables to the target state can be evaluated under a maximum-entropy interventional distribution. Based on this idea, this paper defines partial effective information decomposition (PEID), thereby further decomposing effective information, which has traditionally been discussed at the whole-system scale, into single-variable effects, joint-variable effects, and synergistic effects. As a result, we can not only determine whether a system has higher effective information at a macroscopic scale, but also further ask whether the source of this macroscopic effective information lies in single-variable transmission, within-module integration, or irreducible synergistic causal effects among multiple variables.

The first main contribution of this paper is to connect the interventionist problem of causal emergence with the problem of information decomposition. Existing studies of causal emergence usually focus on whether a macroscopic scale can obtain higher effective information than the microscopic scale \cite{Hoel2013QuantifyingCE,Hoel2017WhenTM,Rosas2020ReconcilingEA}, whereas PID and related information-decomposition studies more often focus on how the informational contribution of multiple source variables to a target variable can be decomposed into redundant, unique, and synergistic components \cite{Williams2010NonnegativeDO,Bertschinger2013SharedInformation,Harder2013BivariateRedundancy,Finn2018PointwisePID}. This paper shows that, under independent maximum-entropy interventions, the redundancy among source variables can be systematically eliminated, so that the non-additive component of effective information naturally corresponds to a form of synergistic causal information in the interventionist sense. This provides a more direct dynamical explanation for understanding the idea that ``the whole is greater than the sum of its parts'': when the effective information from joint source variables to a future state strictly exceeds the sum of the contributions from each part alone, the system contains a synergistic causal structure that cannot be reduced to pairwise effects.

Furthermore, we propose an EI-based causal graph representation. Traditional causal graphs usually take pairwise edges as their basic units, but many causal effects in complex systems cannot be fully expressed by binary edges. By representing pairwise unique effective information as ordinary directed edges and synergistic effective information as hyperedges, this paper enables causal graphs to express both single-variable causal effects and higher-order synergistic causal effects. We can therefore extend this framework to multiscale causal graphs and coarse-graining analysis. Through a handcrafted six-node Boolean network, this paper shows that complex higher-order dependencies at the microscopic level can be absorbed into the internal encoding of macroscopic nodes after an appropriate coarse-graining, thereby forming a simpler causal graph at the macroscopic scale. This result indicates that causal emergence does not only mean that macroscopic variables have higher EI in numerical value, but also that a macroscopic description may reorganize dispersed, complex, and higher-order causal structures at the microscopic level into a lower-dimensional, more stable, and more interpretable dynamical mechanism. Therefore, the advantage of the macroscopic scale can be understood as a form of causal-structure compression: it preserves the most effective information about future states while reducing redundancy and unnecessary complexity in the microscopic description. At the same time, we examine the definition and computation of downward causation under PEID.

PEID can be applied not only to discrete systems, but also to continuous nonlinear dynamical systems. This paper introduces transport map density estimation \cite{baptista2024representation}, and demonstrates through nonlinear continuous-dynamics experiments that this method can reliably distinguish single-variable driving from synergistic driving under different noise levels. This shows that PEID can be used for black-box dynamical analysis in continuous state spaces. We then apply this framework to a real-world air quality forecasting task. By training an MLP on the KnowAir-V2 dataset \cite{wang2025knowair} and using the transport map to estimate inter-station $EI$ and $Syn$ on the fixed predictor, this paper demonstrates that PEID can serve as a causal interpretability tool for machine-learning forecasting models. In the Hangzhou $12h$ air pollution experiment, the $\mathrm{O}_3 \to \mathrm{O}_3$ graph reveals several stable outward ozone hubs and terrain-filtered delayed propagation relations; the $\mathrm{PM}_{2.5} \to \mathrm{O}_3$ graph shows that the mixed-pollution background in the urban core has important explanatory power for future ozone changes; and the $\mathrm{O}_3+\mathrm{PM}_{2.5} \to \mathrm{O}_3$ synergy graph further indicates that bivariate synergistic effects do exist, but are weaker than the dominant pairwise cross-station structure in the current task. This application shows that the proposed method can read out causal information structures with spatial interpretability from learned machine-learning parameters without directly resolving the original physical equations.

Nevertheless, this paper still has several limitations. First, the theoretical derivations in this paper rely on maximum-entropy interventions. In real observational data, we must provide reasonable assumptions that allow us to make the most appropriate maximum-entropy interventions, because variables in real systems are often constrained by physical constraints, conservation relations, common drivers, and unobserved confounders. Future work can further investigate how to formulate reasonable assumptions about maximum-entropy distribution interventions given observational data and prior knowledge.

Second, this paper only shows that maximum-entropy distribution interventions can eliminate redundant information on the source-variable side, but they cannot eliminate redundancy on the target-variable side; therefore, no more concise and reasonable definition is provided for causal information decomposition on the target-variable side. Future research may start from EI and attempt to provide a more fine-grained partition into informational atoms, especially by extending the decomposition to target variables. In this case, we would be able to observe redundant information in causal mechanisms caused by one-to-many mappings. In addition, for PEID with continuous variables, we cannot guarantee the non-negativity of synergistic effective information, and further research is needed to verify the rigorous mathematical definition of PEID for continuous variables.

Moreover, the current dynamical systems all assume Markovianity. How to perform reasonable maximum-entropy distribution interventions and PEID analysis on a time-delayed non-Markovian system is a very important problem. Solving this theoretical problem would allow us to use architectures with stronger fitting capabilities, such as RNNs and Transformers, to address time-series prediction problems in real data, thereby obtaining more comprehensive and reliable causal structures. In addition, we can combine PEID with machine-learning frameworks for identifying causal emergence \cite{Yang2025FindingEI}, and provide analyses of multiscale and cross-scale causal structures. This would help us understand, predict, and control complex systems more concisely and effectively.

Overall, this paper provides a new method for quantifying the causal structure of complex systems starting from effective information. It unifies causal emergence, higher-order synergy, downward causation, multiscale coarse-graining, and machine-learning model interpretation within the same interventionist information-theoretic framework. The experiments in this paper show that PEID can not only characterize the causal structures of known mechanisms in controlled Boolean systems and continuous nonlinear systems, but also reveal physically interpretable inter-station causal influences in a real-world air quality forecasting task. If its statistical estimation, theoretical axiomatization, and large-scale machine-learning algorithms can be further improved in the future, PEID is expected to become a general tool for analyzing multiscale causal structures and emergent mechanisms in complex systems.

\bibliographystyle{unsrt}  
\bibliography{references}

\appendix

\section{PID axiomatic background}
\label{app:pid-axioms}

This appendix briefly reviews the axiomatic language of partial information decomposition (PID), which will be used to justify the correspondence between PEID and PID in the three-variable case.

For two source variables $X_1,X_2$ and one target variable $Y$, the joint mutual information is decomposed as
\begin{equation}
I(X_1,X_2;Y)
=
Red(X_1,X_2;Y)
+
Un(X_1;Y\mid X_2)
+
Un(X_2;Y\mid X_1)
+
Syn(X_1,X_2;Y).
\label{eq:pid_decomp_app}
\end{equation}

Here $Red(\cdot)$ denotes the redundancy term, while the unique and synergistic terms are induced by the redundancy function. In this paper, we only use the following standard axioms.

\paragraph{Symmetry (S).}
The redundancy is invariant under permutations of the source variables:
\begin{equation}
Red(A_1,\dots,A_k;S)
=
Red(A_{\pi(1)},\dots,A_{\pi(k)};S).
\label{eq:pid_symmetry}
\end{equation}

\paragraph{Self-redundancy (I).}
For a single source, redundancy reduces to ordinary mutual information:
\begin{equation}
Red(A;S)=I(A;S).
\label{eq:pid_self_redundancy}
\end{equation}

\paragraph{Monotonicity (M).}
Redundancy cannot increase when more sources are included:
\begin{equation}
Red(A_1,\dots,A_k;S)
\le
Red(A_1,\dots,A_{k-1};S).
\label{eq:pid_monotonicity}
\end{equation}

\paragraph{Identity (Id).}
When the target is the union of the two sources,
\begin{equation}
Red(A_1,A_2;A_1\cup A_2)=I(A_1;A_2).
\label{eq:pid_identity}
\end{equation}

\paragraph{Left chain rule (LC).}
For two target variables $S$ and $S'$ and source variables $A_1,\dots,A_k$, the redundancy associated with the joint target $(S,S')$ satisfies
\begin{equation}
Red(A_1,\dots,A_k;S,S')
=
Red(A_1,\dots,A_k;S)
+
Red(A_1,\dots,A_k;S'\mid S).
\label{eq:pid_left_chain_rule}
\end{equation}
Here the conditional redundancy term is defined by averaging the redundancy under the conditional distributions:
\begin{equation}
Red(A_1,\dots,A_k;S'\mid S)
:=
\sum_s p(s)\,
Red_{p(\cdot\mid s)}(A_1,\dots,A_k;S').
\label{eq:pid_conditional_redundancy}
\end{equation}
This axiom can be understood as a redundancy-level analogue of the chain rule for mutual information. It specifies how shared information about a composite target can be decomposed into shared information about the first target component and additional shared information about the second target component conditioned on the first. 

Under these axioms, the unique term is determined by
\begin{equation}
Un(X_1;Y\mid X_2)
=
I(X_1;Y)-Red(X_1,X_2;Y),
\label{eq:pid_unique_app}
\end{equation}
while the synergy term is then fixed by Eq.~\eqref{eq:pid_decomp_app}.

\section{Compatibility with PID in the three-variable case}
\label{app:three_variable_proof}

We now consider the basic $2\to1$ setting with source variables $X_t^{(1)},X_t^{(2)}$ and target variable $X_{t+1}$. Under the source-side maximum-entropy independent intervention,
\begin{equation}
X_t^{(1)} \perp X_t^{(2)}.
\label{eq:appendix_independence}
\end{equation}

\begin{lemma}
Under the PID axioms $(S,I,M,LC,Id)$, for arbitrary random variables $U,V,T$, if $U \perp V$, then
\begin{equation}
Red(U,V;T)=0.
\label{eq:lemma_redundancy_zero}
\end{equation}
\end{lemma}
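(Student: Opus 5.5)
This is the standard argument that (LC) and (Id) together force redundancy to vanish between independent sources (cf. Bertschinger et al.; Rauh et al.). The plan is to enlarge the target to $(T,U,V)$, where redundancy can be pinned from above by (Id), and then transport the bound back down to $T$ with the left chain rule. Nonnegativity of $Red$ is taken as part of the standard PID setting, as in Williams--Beer.

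\textbf{Step 1 (upper bound at the composite target).} By (Id), $Red(U,V;(U,V))=I(U;V)=0$, since $U\perp V$.

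\textbf{Step 2 (chain rule up to the full target).} Apply (LC) with first target component $(U,V)$ and second component $T$:
\begin{equation}
Red(U,V;(U,V),T)=Red(U,V;(U,V))+Red(U,V;T\mid (U,V)).
\end{equation}
The first term vanishes by Step~1.

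The conditional term is an average over $(u,v)$ of redundancies computed under $p(\cdot\mid u,v)$. Under each such conditional distribution both sources are constant, so self-redundancy (I) together with monotonicity (M) bounds each summand by $I_{p(\cdot\mid u,v)}(U;T)=0$. Combined with nonnegativity, each summand is $0$. Hence $Red(U,V;(U,V,T))=0$.

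\textbf{Step 3 (chain rule back down).} Apply (LC) in the other order, with first target component $T$ and second $(U,V)$:
\begin{equation}
Red(U,V;T,(U,V))=Red(U,V;T)+Red(U,V;(U,V)\mid T).
\end{equation}
The left side equals the quantity from Step~2, since the joint target is the same random variable. Symmetry (S) handles source ordering; the reordering of target components is implicit in (LC) being stated for an arbitrary pair of targets. So $0=Red(U,V;T)+Red(U,V;(U,V)\mid T)$. Both terms on the right are nonnegative, so both vanish, which gives $Red(U,V;T)=0$.

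\textbf{Main obstacle.} The delicate point is Step~2's treatment of the conditional redundancy, together with the reliance on nonnegativity, which is not among the listed axioms. If nonnegativity is not granted, I would first try to derive it from (I) and (M) applied to degenerate targets, or else add it explicitly as the standard Williams--Beer assumption. I would also check that (LC) can be applied with a target that is a function of the sources. Once the lemma is established, Proposition~\ref{prop:three_variable} follows immediately: the unique terms equal $I(X_t^{(i)};X_{t+1})-0$, which is $EI$ under $q^{\max}$, and the synergy term is fixed by the decomposition.
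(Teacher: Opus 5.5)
You follow the paper's proof essentially line for line. Both arguments expand $Red(U,V;(U,V),T)$ via (LC) in the two orders. Both use (Id) for $Red(U,V;(U,V))$, and (M), (I) and nonnegativity for the deterministic-source term. At the end, both rely on nonnegativity of $Red(U,V;(U,V)\mid T)$, which the paper uses silently when it ``substitutes'' into its comparison identity. The real problem sits exactly where you placed your main obstacle, and it cannot be repaired.

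Your fallback of deriving nonnegativity fails. Co-information $I(U;V)-I(U;V\mid T)$ satisfies (S), (I), (M) and (Id) on one and two sources. By the chain rule for conditional mutual information, and with the paper's averaged conditional redundancy, it also satisfies (LC). Extending it by minus the target entropy for three or more sources preserves all the axioms. Yet for $U\perp V$ it equals $-I(U;V\mid T)$, which is $-1$ bit for $T=U\oplus V$ with $U,V$ i.i.d.\ uniform bits. So without nonnegativity the lemma is false.

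With nonnegativity added, the axiom set becomes inconsistent, and your own Step 3 shows this. The term you discard is $Red(U,V;(U,V)\mid T)=\sum_t p(t)\,Red_{p(\cdot\mid t)}(U,V;(U,V))$. It is not free: (Id) applied to each conditional law $p(\cdot\mid t)$ evaluates it to $I(U;V\mid T)$. Your two expansions therefore prove the exact identity $Red(U,V;T)=I(U;V)-I(U;V\mid T)$. The conclusion ``both vanish'' would then force $I(U;V\mid T)=0$ for every $T$ whenever $U\perp V$, which XOR refutes. Hence (I), (M), (LC), (Id) and nonnegativity have no common model, and the lemma holds only vacuously; the paper's proof has the same defect.

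To support Proposition~\ref{prop:three_variable}, zero redundancy for independent sources has to be adopted as an explicit modelling assumption, giving up (LC), rather than derived. Nonnegative measures satisfying (S), (I), (M) and (Id) without (LC), such as that of Bertschinger et al., assign positive redundancy to independent sources. For AND with independent uniform inputs this is about $0.311$ bits.
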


\begin{proof}
By the target chain rule axiom $(LC)$, for the augmented target $(U,V,T)$ we have
\begin{equation}
Red(U,V;U,V,T)
=
Red(U,V;U,V)
+
Red(U,V;T\mid U,V),
\label{eq:lc_expand_1}
\end{equation}
and also
\begin{equation}
Red(U,V;U,V,T)
=
Red(U,V;T)
+
Red(U,V;U,V\mid T).
\label{eq:lc_expand_2}
\end{equation}

Comparing Eqs.~\eqref{eq:lc_expand_1} and \eqref{eq:lc_expand_2}, we obtain
\begin{equation}
Red(U,V;U,V)
+
Red(U,V;T\mid U,V)
=
Red(U,V;T)
+
Red(U,V;U,V\mid T).
\label{eq:lc_compare}
\end{equation}

By the identity axiom $(Id)$,
\begin{equation}
Red(U,V;U,V)=I(U;V).
\label{eq:identity_uv}
\end{equation}
Since $U \perp V$, it follows that
\begin{equation}
Red(U,V;U,V)=0.
\label{eq:identity_zero}
\end{equation}

Moreover, by monotonicity and nonnegativity,
\begin{equation}
0 \le Red(U,V;T\mid U,V) \le I(U,V;T\mid U,V)=0,
\label{eq:conditional_zero}
\end{equation}
hence
\begin{equation}
Red(U,V;T\mid U,V)=0.
\label{eq:conditional_redundancy_zero}
\end{equation}

Substituting Eqs.~\eqref{eq:identity_zero} and \eqref{eq:conditional_redundancy_zero} into Eq.~\eqref{eq:lc_compare} yields
\begin{equation}
Red(U,V;T)=0.
\end{equation}
This proves the lemma.
\end{proof}

Applying Lemma~1 to Eq.~\eqref{eq:appendix_independence}, we immediately obtain
\begin{equation}
Red\bigl(X_t^{(1)},X_t^{(2)};X_{t+1}\bigr)=0.
\label{eq:red_zero_xt}
\end{equation}

Therefore,
\begin{equation}
Un\bigl(X_t^{(1)};X_{t+1}\mid X_t^{(2)}\bigr)
=
I\bigl(X_t^{(1)};X_{t+1}\bigr),
\label{eq:un_1_xt}
\end{equation}
and similarly,
\begin{equation}
Un\bigl(X_t^{(2)};X_{t+1}\mid X_t^{(1)}\bigr)
=
I\bigl(X_t^{(2)};X_{t+1}\bigr).
\label{eq:un_2_xt}
\end{equation}

Using the PID decomposition identity,
\begin{equation}
\begin{aligned}
I\bigl(X_t^{(1)},X_t^{(2)};X_{t+1}\bigr)
=\;&
Red\bigl(X_t^{(1)},X_t^{(2)};X_{t+1}\bigr) \\
&+
Un\bigl(X_t^{(1)};X_{t+1}\mid X_t^{(2)}\bigr) \\
&+
Un\bigl(X_t^{(2)};X_{t+1}\mid X_t^{(1)}\bigr) \\
&+
Syn\bigl(X_t^{(1)},X_t^{(2)};X_{t+1}\bigr),
\end{aligned}
\label{eq:pid_decomp_xt}
\end{equation}
and substituting Eqs.~\eqref{eq:red_zero_xt}--\eqref{eq:un_2_xt}, we obtain
\begin{equation}
Syn\bigl(X_t^{(1)},X_t^{(2)};X_{t+1}\bigr)
=
I\bigl(X_t^{(1)},X_t^{(2)};X_{t+1}\bigr)
-
I\bigl(X_t^{(1)};X_{t+1}\bigr)
-
I\bigl(X_t^{(2)};X_{t+1}\bigr).
\label{eq:syn_difference_xt}
\end{equation}

On the other hand, under the partition $\mathcal{P}=\{X_t^{(1)},X_t^{(2)}\}$, the PEID synergy term is defined by
\begin{equation}
Syn_{\mathcal{P}}^{\mathrm{EID}}\bigl((X_t^{(1)},X_t^{(2)}) \to X_{t+1}\bigr)
=
I\bigl(X_t^{(1)},X_t^{(2)};X_{t+1}\bigr)
-
I\bigl(X_t^{(1)};X_{t+1}\bigr)
-
I\bigl(X_t^{(2)};X_{t+1}\bigr).
\label{eq:peid_syn_xt}
\end{equation}

Comparing Eqs.~\eqref{eq:syn_difference_xt} and \eqref{eq:peid_syn_xt}, we conclude that
\begin{equation}
Syn_{\mathcal{P}}^{\mathrm{EID}}\bigl((X_t^{(1)},X_t^{(2)}) \to X_{t+1}\bigr)
=
Syn\bigl(X_t^{(1)},X_t^{(2)};X_{t+1}\bigr).
\label{eq:peid_pid_equiv}
\end{equation}

Hence, in the basic $2\to1$ case, once the source-side intervention renders the two source variables independent, the PID redundancy term vanishes and the PEID synergy reduces exactly to the PID synergy term.

\section{Proof of Theorem~\ref{thm:nonnegativity}}
\label{app:nonnegativity-proof}

The proof strategy follows the discussion of the nonnegativity of $\Phi$ in IIT~2.0 by Oizumi et al.~\cite{Oizumi2016MeasuringII}.

By definition,
\begin{equation}
Syn_{\mathcal{P}}^{\mathrm{EID}}\bigl(X_t^A \to X_{t+1}^B\bigr)
=
EI\bigl(X_t^A \to X_{t+1}^B\bigr)
-
\sum_{i=1}^m EI\bigl(M_i \to X_{t+1}^B\bigr).
\label{eq:A1}
\end{equation}

Expanding the mutual informations in terms of entropy gives
\begin{equation}
\begin{aligned}
Syn_{\mathcal{P}}^{\mathrm{EID}}\bigl(X_t^A \to X_{t+1}^B\bigr)
=\;&
H_{q^{\max}}\bigl(X_t^A\bigr)
-
H_{q^{\max}}\bigl(X_t^A \mid X_{t+1}^B\bigr) \\
&-
\sum_{i=1}^m H_{q^{\max}}(M_i)
+
\sum_{i=1}^m H_{q^{\max}}\bigl(M_i \mid X_{t+1}^B\bigr).
\end{aligned}
\label{eq:A2}
\end{equation}

Since the source-side maximum-entropy intervention factorizes over the partition $\mathcal{P}$,
\begin{equation}
q^{\max}(x)=\prod_{i=1}^m q^{\max}(m_i),
\label{eq:A3}
\end{equation}
the total entropy of the source variables satisfies
\begin{equation}
H_{q^{\max}}\bigl(X_t^A\bigr)=\sum_{i=1}^m H_{q^{\max}}(M_i).
\label{eq:A4}
\end{equation}

Substituting Eq.~\eqref{eq:A4} into Eq.~\eqref{eq:A2}, we obtain
\begin{equation}
Syn_{\mathcal{P}}^{\mathrm{EID}}\bigl(X_t^A \to X_{t+1}^B\bigr)
=
\sum_{i=1}^m H_{q^{\max}}\bigl(M_i \mid X_{t+1}^B\bigr)
-
H_{q^{\max}}\bigl(X_t^A \mid X_{t+1}^B\bigr).
\label{eq:A5}
\end{equation}

The right-hand side is exactly the definition of conditional total correlation. Therefore,
\begin{equation}
Syn_{\mathcal{P}}^{\mathrm{EID}}\bigl(X_t^A \to X_{t+1}^B\bigr)
=
TC_{q^{\max}}\bigl(M_1,\dots,M_m \mid X_{t+1}^B\bigr).
\label{eq:A6}
\end{equation}

Furthermore, the conditional total correlation can be written as the expectation of a conditional Kullback--Leibler divergence:
\begin{equation}
TC_{q^{\max}}\bigl(M_1,\dots,M_m \mid X_{t+1}^B\bigr)
=
\mathbb{E}_{q^{\max}(x_{t+1}^B)}
\Biggl[
D_{\mathrm{KL}}
\Biggl(
q^{\max}\bigl(m_1,\dots,m_m \mid x_{t+1}^B\bigr)
\,\Big\|\,
\prod_{i=1}^m q^{\max}\bigl(m_i \mid x_{t+1}^B\bigr)
\Biggr)
\Biggr].
\label{eq:A7}
\end{equation}

Since the Kullback--Leibler divergence is always nonnegative, we have
\begin{equation}
TC_{q^{\max}}\bigl(M_1,\dots,M_m \mid X_{t+1}^B\bigr)\ge 0.
\label{eq:A8}
\end{equation}

Hence,
\begin{equation}
Syn_{\mathcal{P}}^{\mathrm{EID}}\bigl(X_t^A \to X_{t+1}^B\bigr)\ge 0.
\label{eq:A9}
\end{equation}

This also gives the mechanistic interpretation of the synergistic term: once the target subset $X_{t+1}^B$ is given, if the source parts still fail to be conditionally independent, then the remaining conditional coupling is precisely what constitutes the source-side synergy in this paper.

\section{Proof of Theorem~\ref{thm:hierarchicaladditivity}}
\label{app:hierarchical_proof}

Let the original partition be $\mathcal{P}$, and suppose that one block $M^\star \in \mathcal{P}$ is refined into
\begin{equation}
\mathcal{R} = \{R_1, \dots, R_k\}, \qquad
\mathcal{P}' = \bigl(\mathcal{P} \setminus \{M^\star\}\bigr) \cup \mathcal{R}.
\end{equation}

By the definition of synergy, we have
\begin{equation}
Syn_{\mathcal{P}'}^{\mathrm{EID}}\bigl(X_t^A \to X_{t+1}^B\bigr)
=
EI\bigl(X_t^A \to X_{t+1}^B\bigr)
-
\sum_{M \in \mathcal{P} \setminus \{M^\star\}} EI\bigl(M \to X_{t+1}^B\bigr)
-
\sum_{r=1}^k EI\bigl(R_r \to X_{t+1}^B\bigr).
\end{equation}

Adding and subtracting $EI\bigl(M^\star \to X_{t+1}^B\bigr)$ on the right-hand side yields
\begin{equation}
\begin{aligned}
Syn_{\mathcal{P}'}^{\mathrm{EID}}\bigl(X_t^A \to X_{t+1}^B\bigr)
&=
\Bigl[
EI\bigl(X_t^A \to X_{t+1}^B\bigr)
-
\sum_{M \in \mathcal{P} \setminus \{M^\star\}} EI\bigl(M \to X_{t+1}^B\bigr)
-
EI\bigl(M^\star \to X_{t+1}^B\bigr)
\Bigr] \\
&\quad
+
\Bigl[
EI\bigl(M^\star \to X_{t+1}^B\bigr)
-
\sum_{r=1}^k EI\bigl(R_r \to X_{t+1}^B\bigr)
\Bigr].
\end{aligned}
\end{equation}

The first term is precisely $Syn_{\mathcal{P}}^{\mathrm{EID}}\bigl(X_t^A \to X_{t+1}^B\bigr)$, while the second term equals $Syn_{\mathcal{R}}^{\mathrm{EID}}\bigl(M^\star \to X_{t+1}^B\bigr)$. Therefore,
\begin{equation}
Syn_{\mathcal{P}'}^{\mathrm{EID}}\bigl(X_t^A \to X_{t+1}^B\bigr)
=
Syn_{\mathcal{P}}^{\mathrm{EID}}\bigl(X_t^A \to X_{t+1}^B\bigr)
+
Syn_{\mathcal{R}}^{\mathrm{EID}}\bigl(M^\star \to X_{t+1}^B\bigr).
\end{equation}

This identity shows that the synergistic term admits a consistent additive decomposition under partition refinement, and hence can be propagated and accumulated across hierarchical levels.

\section{Supplementary Experiment on Non-Optimal Coarse-Graining}
\label{sec:exp3app}

As a supplement to Fig.~7.4(b) in Sec.~6.4, we next present the macro--micro causal graph comparison for the same hand-crafted Boolean system under the ``representative non-optimal coarse-graining''. The corresponding partition is $\{\{a1,a2\};\{b1,c1\};\{b2,c2\}\}$, which mixes together micro variables that originally belong to different macro blocks. It can therefore be used to directly illustrate how an ``incorrect partition'' fails to absorb the local control synergies inside those blocks into macro nodes, and instead regenerates them as pseudo-hyperedges at the macro level.

\begin{figure}[htbp]
\centering
\includegraphics[width=0.90\linewidth]{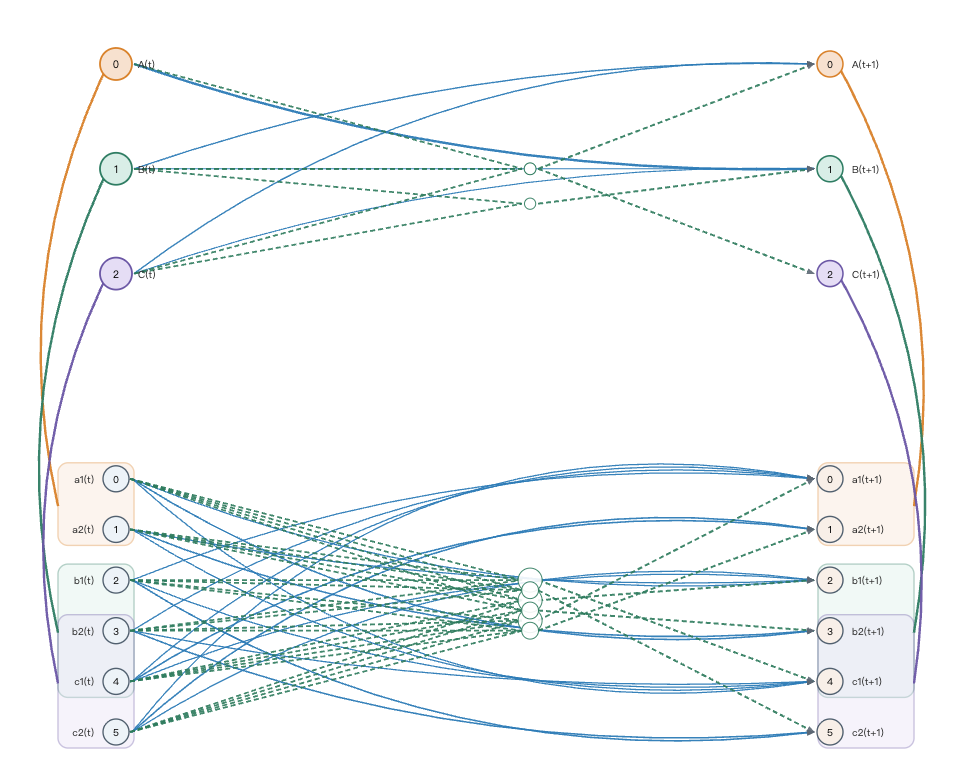}
\caption{Macro--micro causal graph comparison under a representative non-optimal coarse-graining. The upper half shows the macro EI causal graph, the lower half shows the micro EI causal graph, and the colored curves on the left and right indicate the coarse-graining maps at times $t$ and $t+1$, respectively.}
\label{fig:non-optimal-macro-comparison}
\end{figure}

This figure is consistent with the current notebook output: under this representative non-optimal coarse-graining, the macro total EI is only $1.658$, which is significantly lower than the $3.000$ achieved by the optimal coarse-graining; at the same time, the macro total synergy rises to $0.393$, whereas the optimal coarse-graining attains $0.000$. At the macro level in the figure, clear synergistic hyperedges reappear, and the strongest one reaches $0.150$ bit; correspondingly, this partitioning also supports the claim made in Sec.~6.4 that, when EI is not maximized, the apparent pairwise mechanism at the macro level is no longer clearer, but instead the incorrect coarse-graining creates additional local high-order dependencies.

\section{Transport map density estimation for continuous EI and synergy}
\label{app:transport-map-density-estimation}

This appendix summarizes the transport-map density estimator used for the continuous-system experiments. In discrete systems, EI and synergy can be computed from the transition probability matrix. In continuous systems, however, the relevant quantities are expressed through mutual information under intervention distributions, and therefore require estimates of joint and marginal densities. The role of the transport-map estimator is to provide a sample-based density model for this purpose. It is not intended to replace the local Jacobian--Covariance analysis of the mechanism, but rather to give a distribution-level estimator of the same intervention-based quantities.

Let $\mathbf{z}\in\mathbb{R}^{d}$ denote a concatenated sample vector, which may contain source variables, target variables, or both. A monotone triangular transport map is a differentiable transformation
\begin{equation}
\mathbf{T}(\mathbf{z})
=
\bigl(T_{1}(z_{1}),T_{2}(z_{1},z_{2}),\ldots,T_{d}(z_{1},\ldots,z_{d})\bigr),
\qquad
\partial_{z_{k}}T_{k}(z_{1},\ldots,z_{k})>0.
\label{eq:transport-map-triangular}
\end{equation}
The map pushes the unknown target distribution $\mu$ to a simple reference distribution $\eta$, usually a standard Gaussian. The triangular and monotone structure corresponds to the Knothe--Rosenblatt rearrangement and makes the Jacobian determinant tractable. Under the pullback convention $\mathbf{T}_{\#}\mu=\eta$, the density of $\mu$ can be evaluated by
\begin{equation}
\log p_{\mu}(\mathbf{z})
=
\log p_{\eta}\bigl(\mathbf{T}(\mathbf{z})\bigr)
+
\log \lvert \det\nabla\mathbf{T}(\mathbf{z})\rvert
=
\log p_{\eta}\bigl(\mathbf{T}(\mathbf{z})\bigr)
+
\sum_{k=1}^{d}\log \partial_{z_{k}}T_{k}(z_{1},\ldots,z_{k}).
\label{eq:transport-map-change-of-variables}
\end{equation}
Thus, once the map is learned from samples, density evaluation is reduced to evaluating a reference Gaussian density and a triangular Jacobian determinant. This is the main reason transport maps are useful for estimating EI and synergy in continuous systems, where repeated estimates of intervention-induced joint and marginal densities are required \cite{baptista2024representation}.

In the experiments, we use a conservative affine triangular approximation to this general framework. Given samples $\{\mathbf{z}^{(m)}\}_{m=1}^{M}$, we estimate the empirical mean $\widehat{\boldsymbol{\mu}}$ and the Cholesky factor $\widehat{\mathbf{L}}$ of the covariance matrix, and define
\begin{equation}
\mathbf{T}_{\mathrm{aff}}(\mathbf{z})
=
\widehat{\mathbf{L}}^{-1}(\mathbf{z}-\widehat{\boldsymbol{\mu}}),
\qquad
\log\widehat{p}_{\mathrm{tm}}(\mathbf{z})
=
-\frac{1}{2}
\bigl[d\log(2\pi)+\lVert \mathbf{T}_{\mathrm{aff}}(\mathbf{z})\rVert_{2}^{2}\bigr]
-
\log\lvert\det\widehat{\mathbf{L}}\rvert .
\label{eq:affine-transport-density}
\end{equation}
This estimator should therefore be understood as an affine Gaussian transport-map density model. To capture simple nonlinear source geometry while keeping the estimator stable, the source variables are lifted before density estimation. For one source $x$, we use $(x,x^{2},x^{3})$; for two sources $(x,y)$, we use $(x,y,xy,x^{2},y^{2})$. The cross term $xy$ is included because it is necessary for detecting joint nonlinear mechanisms that cannot be reduced to separable single-source effects.

Given source samples $\mathbf{x}^{(m)}$ and target samples $\mathbf{y}^{(m)}$, mutual information is estimated from the learned transport-map densities by
\begin{equation}
\widehat{I}^{\mathrm{tm}}(X;Y)
=
\frac{1}{M}
\sum_{m=1}^{M}
\bigl[
\log\widehat{p}_{\mathrm{tm}}(\mathbf{x}^{(m)},\mathbf{y}^{(m)})
-
\log\widehat{p}_{\mathrm{tm}}(\mathbf{x}^{(m)})
-
\log\widehat{p}_{\mathrm{tm}}(\mathbf{y}^{(m)})
\bigr]
-
\widehat{b}_{M,d},
\label{eq:transport-map-mi-estimator}
\end{equation}
where $\widehat{b}_{M,d}$ is a finite-sample correction for the affine Gaussian covariance estimate. For two sources, the corresponding synergy estimator is
\begin{equation}
\widehat{\mathrm{Syn}}^{\mathrm{tm}}(X_{1},X_{2};Y)
=
\widehat{I}^{\mathrm{tm}}\bigl(\phi_{12}(X_{1},X_{2});Y\bigr)
-
\widehat{I}^{\mathrm{tm}}\bigl(\phi_{1}(X_{1});Y\bigr)
-
\widehat{I}^{\mathrm{tm}}\bigl(\phi_{2}(X_{2});Y\bigr).
\label{eq:transport-map-two-source-synergy}
\end{equation}
Here $\phi_{1}$, $\phi_{2}$, and $\phi_{12}$ denote the corresponding lifted source representations. Positive values indicate information about the target that is available only when the two sources are considered jointly. Values near zero indicate that the target response can already be explained by single-source effects, while small negative values are treated as finite-sample or density-model mismatch artifacts.

In the controlled continuous experiment, the intervention variables satisfy $Q_{2}^{n},Q_{3}^{n}\sim\operatorname{Uniform}[-L/2,L/2]$, and the target is generated by
\begin{equation}
Q_{1}^{n+1}
=
\alpha\sin(Q_{2}^{n}Q_{3}^{n})
+
(1-\alpha)Q_{2}^{n}
+
\epsilon,
\qquad
\alpha\in[0,1].
\label{eq:alpha-sweep-target}
\end{equation}
When $\alpha=0$, the mechanism is dominated by a single-source term, so the estimated synergy should be close to zero. As $\alpha$ increases, the joint nonlinear term becomes dominant, and the estimated synergy ratio is expected to increase. Changing $L$ further changes the intervention support and therefore tests whether the estimator can distinguish separable source effects from genuinely joint nonlinear source effects. In this sense, the transport-map estimator provides a distribution-level readout of the same structural question addressed by the Jacobian--Covariance formula: whether a target response is explainable by independent source directions or requires a joint source direction.

\section{Details of the Air Pollution Experiment}
\label{sec:air-pollution-experiment-details}

\subsection{Dataset}
\label{subsec:knowairv2-scope}

This appendix provides additional details on the real-world air pollution data used in Section~\ref{sec:applications}. The experiment is based on KnowAir-V2, a large-scale benchmark dataset for air quality forecasting introduced together with PCDCNet, a surrogate model designed to incorporate physical--chemical dynamics and constraints into deep learning-based air quality forecasting \cite{wang2025knowair}. KnowAir-V2 was constructed to support the development and evaluation of data-driven surrogate models that can jointly use pollutant observations, meteorological conditions, and emission-related variables. The dataset has been preprocessed, including missing-value imputation, and is intended to provide a standardized benchmark for interpretable and physically consistent air quality forecasting.

KnowAir-V2 covers the period from 2016 to 2023 and contains $70{,}128$ hourly observations. The full benchmark includes $355$ monitoring stations located in two representative Chinese regions: the Beijing--Tianjin--Hebei and Surrounding Areas (BTHSA) and the Yangtze River Delta (YRD). These two regions are characterized by dense monitoring networks, strong anthropogenic emissions, and complex interactions among pollutant transport, photochemical formation, meteorological modulation, and regional accumulation. The original benchmark uses the years 2016--2019 for training, 2020--2021 for validation and hyperparameter tuning, and 2022--2023 for testing \cite{wang2025pcdcnet}. In the present study, we use the preprocessed hourly records and extract the Hangzhou station network from the YRD portion of the dataset for the continuous effective information analysis.

The variables in KnowAir-V2 can be grouped into three categories: air pollutant observations, meteorological factors, and anthropogenic emission variables. The air pollutant variables include fine particulate matter, $\mathrm{PM}_{2.5}$, and ground-level ozone, O$_3$, which are also the primary forecasting targets in the original benchmark. The meteorological variables describe atmospheric conditions relevant to dispersion, deposition, vertical mixing, and photochemical reactions. They include 2-meter air temperature (\texttt{t2m}), 2-meter dew point temperature (\texttt{d2m}), total precipitation (\texttt{tp}), surface pressure (\texttt{sp}), boundary-layer height (\texttt{blh}), mean surface downward shortwave radiation flux (\texttt{swr}), and the 100-meter wind components (\texttt{u100}, \texttt{v100}). The emission variables include emissions of fine particulate matter, coarse particulate matter, nitrogen oxides, volatile organic compounds, ammonia, and sulfur dioxide. These emission variables provide information about anthropogenic sources and chemical precursors that influence the formation, transport, and removal of $\mathrm{PM}_{2.5}$ and O$_3$.

\begin{table}[htbp]
\centering
\caption{Summary of variable groups in KnowAir-V2 used for the air pollution experiment.}
\label{tab:knowairv2-variable-summary}
\begin{tabular}{lll}
\toprule
Variable group & Variables & Role in the experiment \\
\midrule
Air pollutants & $\mathrm{PM}_{2.5}$, O$_3$ & Observed pollutant states and forecasting targets \\
Meteorology & \texttt{t2m}, \texttt{d2m}, \texttt{tp}, \texttt{sp}, \texttt{blh}, \texttt{swr}, \texttt{u100}, \texttt{v100} & Exogenous atmospheric context \\
Emissions & $\mathrm{PM}_{2.5}$, $PM_{10}$, NOx, VOCs, NH$_3$, SO$_2$ emissions & Anthropogenic source and precursor information \\
\bottomrule
\end{tabular}
\end{table}

The benchmark integrates multiple data sources. Hourly pollutant observations are obtained from ground-based air quality monitoring stations. Meteorological variables are derived from ERA5 reanalysis data, which provides spatially and temporally continuous atmospheric fields. Emission variables are obtained from the Multi-resolution Emission Inventory for China (MEIC). Since emission inventories are usually provided at coarser temporal resolution, the original benchmark aligns them with the hourly pollutant and meteorological records through temporal downscaling procedures \cite{wang2025pcdcnet}. In the original PCDCNet benchmark, the station graph is constructed by connecting monitoring stations within a $200\,\mathrm{km}$ geodesic-distance threshold, so that spatial edges can represent potential transport pathways among nearby stations. In the present analysis, however, the graph visualized in Section~\ref{sec:applications} is not directly taken from this predefined spatial adjacency graph. Instead, station coordinates are used to place nodes at their geographic locations, while the displayed directed edges are inferred from the estimated $EI^{\mathrm{tm}}$ and $Syn^{\mathrm{tm}}$ quantities.

For the application study, we focus on the Hangzhou station network extracted from KnowAir-V2. The input at each time step consists of the current multivariate state of the selected stations, including pollutant and meteorological information. The prediction target is the future O$_3$ state after a $12h$ forecast horizon. This setting is designed to examine whether the continuous PEID framework can recover interpretable station-level information pathways in a realistic air quality forecasting task. In particular, we compare three types of source--target relations: pairwise $\mathrm{O}_3 \to \mathrm{O}_3$ effects, pairwise $\mathrm{PM}_{2.5} \to \mathrm{O}_3$ effects, and bivariate synergistic effects of $\mathrm{O}_3+\mathrm{PM}_{2.5} \to \mathrm{O}_3$ from the same source station to future O$_3$ at target stations.

The purpose of this subset construction is not to identify the station with the highest pollutant concentration, but to quantify how informative each station's current state is for future ozone dynamics across the network. The $\mathrm{O}_3 \to \mathrm{O}_3$ relation mainly captures persistence, transport, and delayed ozone propagation. The $\mathrm{PM}_{2.5} \to \mathrm{O}_3$ relation captures the predictive contribution of particulate matter as a proxy for pollution accumulation, precursor-related conditions, and meteorological background. The bivariate $\mathrm{O}_3+\mathrm{PM}_{2.5} \to \mathrm{O}_3$ relation further tests whether the joint state of ozone and fine particulate matter provides additional non-additive information beyond their individual contributions.

The Hangzhou data are used to train a multistation predictive model, denoted by \texttt{JointStationMLP}, which approximates the local transition from the current multivariate station state to the future pollutant state. After the predictive model is obtained, the continuous PEID procedure is applied to the learned transition model rather than directly to raw temporal correlations. Specifically, the transport-map-based density estimation procedure is used to estimate the mutual-information terms required for continuous $EI^{\mathrm{tm}}$ and $Syn^{\mathrm{tm}}$. This design separates the predictive modeling step from the information-decomposition step: the former learns a smooth multivariate transition function from real air quality data, while the latter evaluates the causal-informational structure of this learned transition under controlled source perturbations.

Because the resulting edges summarize information transfer under the learned transition model, they should be interpreted as station-level causal-informational summaries rather than as direct physical plume trajectories. Their interpretation must therefore be combined with station type, geographic location, local terrain, traffic and industrial activity, meteorological transport, and the photochemical coupling among O$_3$, NOx, VOCs, and $\mathrm{PM}_{2.5}$.

\subsection{Construction and Training Details of the MLP}
\label{subsec:jointstationmlp-training-details}

This subsection supplements the air pollution experiment in Section~\ref{sec:applications} by describing how the predictor used for the transport-map causal graph is constructed and trained. The main text reports only the resulting transport-map causal graph and its scientific interpretation. Here, the purpose is not to propose a new state-of-the-art air quality forecasting model, but to obtain a sufficiently accurate, differentiable, and jointly trained multistation transition function on which $EI^{\mathrm{tm}}$ and $Syn^{\mathrm{tm}}$ can subsequently be estimated.

The representative experiment uses the Hangzhou $12h$ forecasting setting. The raw data are kept at the hourly resolution, and the sample mode is \texttt{one\_step}: each sample uses only the simultaneous multistation snapshot at the current time as input, without explicitly unfolding a historical temporal window. The cached Hangzhou run contains $N=12$ monitoring stations, and each station has $10$ input variables. Therefore, one input sample has shape $12 \times 10$, and the flattened effective input dimension is $120$. The input variables are ordered as two pollutant variables, $\mathrm{O}_3$ and $\mathrm{PM}_{2.5}$, followed by eight meteorological variables, \texttt{t2m}, \texttt{d2m}, \texttt{sp}, \texttt{tp}, \texttt{blh}, \texttt{msdwswrf}, \texttt{u100}, and \texttt{v100}.

The target variables include only $\mathrm{O}_3$ and $\mathrm{PM}_{2.5}$, and they are predicted jointly for all stations. For a forecasting horizon $h=12$, each training sample can be written as
\begin{equation}
\mathbf{x}_t = \bigl(\mathbf{s}_{1,t},\ldots,\mathbf{s}_{N,t}\bigr),
\qquad
\mathbf{y}_{t+h}
=
\bigl(
\mathrm{O}_{3,1,t+h},\mathrm{PM}_{2.5,1,t+h},
\ldots,
\mathrm{O}_{3,N,t+h},\mathrm{PM}_{2.5,N,t+h}
\bigr),
\label{eq:jointstationmlp-sample}
\end{equation}
where $N=12$ and $\mathbf{s}_{i,t}$ is the $10$-dimensional state vector of station $i$ at the current time. Thus, the model output dimension is $24$. This construction ensures that the predictor is not a collection of independent station-wise models, but a single mapping from the joint current state of all stations to the joint future pollutant state of all stations.

The data are split chronologically. Samples whose future target time is no later than \texttt{2021-12-31 23:00:00} are assigned to the training set, samples from 2022 are assigned to the validation set, and samples from 2023 are assigned to the test set. In the current Hangzhou $12h$ run, this produces $52{,}596$ training samples, $8{,}760$ validation samples, and $8{,}760$ test samples. All input and target variables are standardized using statistics computed only from the training period. For each variable, the mean and standard deviation are estimated over all training times and all stations, and the same statistics are reused for validation, testing, and inverse standardization.

The air pollution experiment uses the \texttt{resmlp} variant of \texttt{JointStationMLP}. The input is first flattened into a vector of length $120$ and then passed through a shared trunk. Each forecasting horizon uses an independent linear output head. Since the present experiment considers only the $12h$ horizon, there is a single $12h$ head.

The architecture is composed of an input projection \texttt{Linear(120, 96)} followed by a SiLU activation, three residual MLP blocks, a final \texttt{LayerNorm(96)}, and an output head \texttt{Linear(96, 24)}. Each residual block consists of
\[
\texttt{LayerNorm(96)}
\;\to\;
\texttt{Linear(96, 96)}
\;\to\;
\texttt{SiLU}
\;\to\;
\texttt{Dropout(0.05)}
\;\to\;
\texttt{Linear(96, 96)}
\;\to\;
\texttt{Dropout(0.05)},
\]
and the resulting block output is added back to the block input. In functional notation, the model can be summarized as
\begin{equation}
\widehat{\mathbf{y}}_{t+h}
=
g_h\bigl(f_\theta(\mathbf{x}_t)\bigr),
\label{eq:jointstationmlp-function}
\end{equation}
where $f_\theta$ denotes the shared ResMLP trunk and $g_h$ denotes the horizon-specific linear head. A joint MLP is used instead of separate station-wise predictors because the subsequent causal graph analysis compares cross-station effects such as $\mathrm{O}_3 \to \mathrm{O}_3$, $\mathrm{PM}_{2.5} \to \mathrm{O}_3$, and $\mathrm{O}_3+\mathrm{PM}_{2.5} \to \mathrm{O}_3$. If each station were modeled independently, cross-station interactions in the input would be structurally weakened by the model design itself.

The model is trained with the Adam optimizer. In the current Hangzhou $12h$ configuration, the learning rate is $5\times 10^{-4}$, the weight decay is $10^{-5}$, and the batch size is $64$. Training samples are randomly shuffled within each epoch and updated by mini-batch optimization. The loss function is the sum of mean squared errors over the forecasting horizons:
\begin{equation}
\mathcal{L}(\theta)
=
\sum_{h\in\mathcal{H}}
\frac{1}{B}
\sum_{b=1}^{B}
\left\|
\widehat{\mathbf{y}}_{b,t+h}
-
\mathbf{y}_{b,t+h}
\right\|_2^2 .
\label{eq:jointstationmlp-loss}
\end{equation}
Since the current experiment fixes a single horizon, $\mathcal{H}=\{12\}$, Eq.~\eqref{eq:jointstationmlp-loss} reduces to the multi-output MSE for the $12h$ target.

The training process allows at most $60$ epochs. Model selection is based on validation MSE, and early stopping is triggered if the best validation loss is not updated for $8$ consecutive epochs. In the cached Hangzhou $12h$ run, the selected parameters correspond to epoch $6$, with a best validation loss of $0.3869$, while the training history records up to epoch $14$.

At evaluation time, model outputs are first transformed back from the standardized scale to the original physical scale. RMSE, MAE, and the global flattened Pearson correlation coefficient are then computed. The baseline is a persistence model: it directly uses the current input snapshot of $\mathrm{O}_3$ and $\mathrm{PM}_{2.5}$ as the prediction for all future horizons. This baseline uses the same test set and the same inverse-standardization procedure as \texttt{JointStationMLP}, and therefore provides a minimal threshold for testing whether the learned model contains information beyond simple persistence.

In the current Hangzhou $12h$ test set, the persistence baseline obtains an overall $\mathrm{RMSE}=50.565$, $\mathrm{MAE}=33.119$, and correlation coefficient $0.203$. By contrast, \texttt{JointStationMLP} obtains an overall $\mathrm{RMSE}=26.006$, $\mathrm{MAE}=18.195$, and correlation coefficient $0.764$. This indicates that the subsequent causal graph is not merely explaining noise from a degraded predictor, but is instead applied to a multistation state-transition model that substantially outperforms the persistence baseline.

\end{document}